\newcounter{thm}
\newtheorem{lemma}[thm]{Lemma}
\newtheorem{theorem}[thm]{Theorem}
\newtheorem{corollary}[thm]{Corollary}
\newcommand{\KL}{\mathrm{KL}}
\newcommand{\kl}{\mathrm{kl}}
\begin{document}
%
\title{PAC-Bayesian Inequalities for Martingales}
%
%
%


\author{Yevgeny Seldin, Fran\c{c}ois Laviolette, Nicol{\`o} Cesa-Bianchi, John Shawe-Taylor, Peter Auer%
\thanks{Yevgeny Seldin is with Max Planck Institute for Intelligent Systems, T\"{u}bingen, Germany, and University College London, London, UK. E-mail: seldin@tuebingen.mpg.de}%
\thanks{Fran\c{c}ois Laviolette is with Universit\'{e} Laval, Qu\'{e}bec, Canada. E-mail: francois.laviolette@ift.ulaval.ca}%
\thanks{Nicol{\`o} Cesa-Bianchi is with Dipartimento di Informatica, Universit{\`a} degli Studi di Milano, Milan, Italy. E-mail: nicolo.cesa-bianchi@unimi.it}%
\thanks{John Shawe-Taylor is with University College London, London, UK. E-mail: jst@cs.ucl.ac.uk}%
\thanks{Peter Auer is with Chair for Information Technology, Montanuniversit{\" a}t Leoben, Leoben, Austria. E-mail: auer@unileoben.ac.at}%
}

%
%

\markboth{IEEE TRANSACTIONS ON INFORMATION THEORY,~Vol.~XX, No.~Y, Month~201X}%
{Seldin \MakeLowercase{\textit{et al.}}: PAC-Bayesian Inequalities for Martingales}
%



\maketitle

\begin{abstract}
We present a set of high-probability inequalities that control the concentration of weighted averages of multiple (possibly uncountably many) simultaneously evolving and interdependent martingales. Our results extend the PAC-Bayesian analysis in learning theory from the i.i.d. setting to martingales opening the way for its application to importance weighted sampling, reinforcement learning, and other interactive learning domains, as well as many other domains in probability theory and statistics, where martingales are encountered.

We also present a comparison inequality that bounds the expectation of a convex function of a martingale difference sequence shifted to the $[0,1]$ interval by the expectation of the same function of independent Bernoulli variables. This inequality is applied to derive a tighter analog of Hoeffding-Azuma's inequality.
\end{abstract}

\begin{IEEEkeywords}
Martingales, Hoeffding-Azuma's inequality, Bernstein's inequality, PAC-Bayesian bounds.
\end{IEEEkeywords}

%
\IEEEpeerreviewmaketitle

\section{Introduction}
%
%
%
%
\IEEEPARstart{M}{artingales} are one of the fundamental tools in probability theory and statistics for modeling and studying sequences of random variables. Some of the most well-known and widely used concentration inequalities for individual martingales are Hoeffding-Azuma's and Bernstein's inequalities \cite{Hoe63,Azu67, Ber46}. We present a comparison inequality that bounds the expectation of a convex function of a martingale difference sequence shifted to the $[0,1]$ interval by the expectation of the same function of independent Bernoulli variables. We apply this inequality in order to derive a tighter analog of Hoeffding-Azuma's inequality for martingales.

\begin{figure}
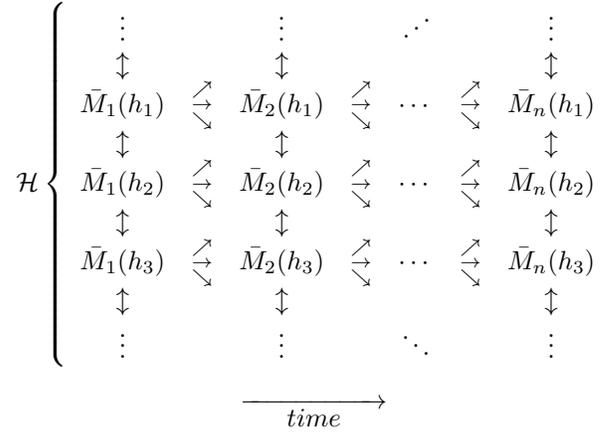

\[
\begin{array}{c}
{\cal H}\left \{ \begin{array}{ccccccc}

\vdots & & \vdots & & \iddots & & \vdots\\

\updownarrow & & \updownarrow & & & & \updownarrow\\

\bar M_1(h_1) & \substack{\nearrow\\\rightarrow\\\searrow} & \bar M_2(h_1) & \substack{\nearrow\\\rightarrow\\\searrow} & \cdots & \substack{\nearrow\\\rightarrow\\\searrow} & \bar M_n(h_1)\\

\updownarrow & & \updownarrow & & & & \updownarrow\\

\bar M_1(h_2) & \substack{\nearrow\\\rightarrow\\\searrow} & \bar M_2(h_2) & \substack{\nearrow\\\rightarrow\\\searrow} & \cdots & \substack{\nearrow\\\rightarrow\\\searrow} & \bar M_n(h_2)\\

\updownarrow & & \updownarrow & & & & \updownarrow\\

\bar M_1(h_3) & \substack{\nearrow\\\rightarrow\\\searrow} & \bar M_2(h_3) & \substack{\nearrow\\\rightarrow\\\searrow} & \cdots & \substack{\nearrow\\\rightarrow\\\searrow} & \bar M_n(h_3)\\

\updownarrow & & \updownarrow & & & & \updownarrow\\

\vdots & & \vdots & & \ddots & & \vdots\\

\end{array} \right .\\
\\
\overrightarrow{~~~~~time~~~~~}
\end{array}
\]
\caption{Illustration of an infinite set of simultaneously evolving and interdependent martingales. ${\cal H}$ is a space that indexes the individual martingales. For a fixed point $h \in {\cal H}$, the sequence $\bar M_1(h), \bar M_2(h), \dots, \bar M_n(h)$ is a single martingale. The arrows represent the dependencies between the values of the martingales: the value of a martingale $h$ at time $i$, denoted by $\bar M_i(h)$, depends on $\bar M_j(h')$ for all $j \leq i$ and $h' \in {\cal H}$ (everything that is ``before'' and ``concurrent'' with $\bar M_i(h)$ in time; some of the arrows are omitted for clarity). A mean value of the martingales with respect to a probability distribution $\rho$ over ${\cal H}$ is given by $\langle \bar M_n, \rho \rangle$. Our high-probability inequalities bound $|\langle \bar M_n, \rho \rangle|$ simultaneously for a large class of $\rho$.}
\label{fig:1}
\end{figure}

More importantly, we present a set of inequalities that make it possible to control weighted averages of multiple simultaneously evolving and interdependent martingales (see Fig. \ref{fig:1} for an illustration). The inequalities are especially interesting when the number of martingales is uncountably infinite and the standard union bound over the individual martingales cannot be applied. The inequalities hold with high probability simultaneously for a large class of averaging laws $\rho$. In particular, $\rho$ can depend on the sample.

One possible application of our inequalities is an analysis of importance-weighted sampling. Importance-weighted sampling is a general and widely used technique for estimating properties of a distribution by drawing samples from a different distribution. Via proper reweighting of the samples, the expectation of the desired statistics based on the reweighted samples from the controlled distribution can be made identical to the expectation of the same statistics based on unweighted samples from the desired distribution. Thus, the difference between the observed statistics and its expected value forms a martingale difference sequence. Our inequalities can be applied in order to control the deviation of the observed statistics from its expected value. Furthermore, since the averaging law $\rho$ can depend on the sample, the controlled distribution can be adapted based on its outcomes from the preceding rounds, for example, for denser sampling in the data-dependent regions of interest. See \cite{SAL+11} for an example of an application of this technique in reinforcement learning.

Our concentration inequalities for weighted averages of martingales are based on a combination of Donsker-Varadhan's variational formula for relative entropy \cite{DV75, DE97, Gra11} with bounds on certain moment generating functions of martingales, including Hoeffding-Azuma's and Bernstein's inequalities, as well as the new inequality derived in this paper.

In a nutshell, the Donsker-Varadhan's variational formula implies that for a probability space $({\cal H}, {\cal B})$, a bounded real-valued random variable $\Phi$ and any two probability distributions $\pi$ and $\rho$ over ${\cal H}$ (or, if ${\cal H}$ is uncountably infinite, two probability density functions), the expected value $\mathbb E_{\rho} [\Phi]$ is bounded as:
\begin{equation}
\mathbb E_{\rho}[\Phi] \leq \KL(\rho\|\pi) + \ln \mathbb E_{\pi} [e^{\Phi}],
\label{eq:basic}
\end{equation}
where $\KL(\rho\|\pi)$ is the KL-divergence (relative entropy) between two distributions \cite{CT91}. We can also think of $\Phi$ as $\Phi = \phi(h)$, where $\phi(h)$ is a measurable function $\phi:{\cal H} \rightarrow \mathbb R$. Inequality \eqref{eq:basic} can then be written using the dot-product notation
\begin{equation}
\langle \phi, \rho \rangle \leq KL(\rho\|\pi) + \ln \left(\langle e^\phi, \pi \rangle \right )
\label{eq:basic-dot}
\end{equation}
and $\mathbb E_\rho[\phi] = \langle \phi, \rho \rangle$ can be thought of as a weighted average of $\phi$ with respect to $\rho$ (for countable ${\cal H}$ it is defined as $\langle \phi, \rho \rangle = \sum_{h \in {\cal H}} \phi(h) \rho(h)$ and for uncountable ${\cal H}$ it is defined as $\langle \phi, \rho \rangle = \int_{\cal H} \phi(h) \rho(h) dh$).\footnote{The complete statement of Donsker-Varadhan's variational formula for relative entropy states that under appropriate conditions $\KL(\rho\|\pi) = \sup_\phi \left (\langle \phi, \rho \rangle - \ln \langle e^{\phi}, \pi \rangle \right)$, where the supremum is achieved by $\phi(h) = \ln \frac{\rho(h)}{\pi(h)}$. However, in our case the choice of $\phi$ is directly related to the values of the martingales of interest and the free parameters in the inequality are the choices of $\rho$ and $\pi$. Therefore, we are looking at the inequality in the form of equation \eqref{eq:basic} and a more appropriate name for it is ``change of measure inequality''.}

The weighted averages $\langle \phi, \rho \rangle$ on the left hand side of \eqref{eq:basic-dot} are the quantities of interest and the inequality allows us to relate all possible averaging laws $\rho$ to a single ``reference'' distribution $\pi$. (Sometimes, $\pi$ is also called a ``prior'' distribution, since it has to be selected before observing the sample.) We emphasize that inequality \eqref{eq:basic-dot} is a deterministic relation. Thus, by a single application of Markov's inequality to $\langle e^\phi, \pi \rangle$ we obtain a statement that holds with high probability for all $\rho$ simultaneously. The quantity $\ln \langle e^\phi, \pi \rangle$, known as the cumulant-generating function of $\phi$, is closely related to the moment-generating function of $\phi$. The bound on $\ln \langle e^\phi, \pi \rangle$, after some manipulations, is achieved via the bounds on moment-generating functions, which are identical to those used in the proofs of Hoeffding-Azuma's, Bernstein's, or our new inequality, depending on the choice of $\phi$.

Donsker-Varadhan's variational formula for relative entropy laid the basis for PAC-Bayesian analysis in statistical learning theory \cite{STW97,ST+98,McA98,See02}, where PAC is an abbreviation for the Probably Approximately Correct learning model introduced by Valiant \cite{Val84}. PAC-Bayesian analysis provides high probability bounds on the deviation of weighted averages of empirical means of sets of independent random variables from their expectations. In the learning theory setting, the space ${\cal H}$ usually corresponds to a hypothesis space; the function $\phi(h)$ is related to the difference between the expected and empirical error of a hypothesis $h$; the distribution $\pi$ is a prior distribution over the hypothesis space; and the distribution $\rho$ defines a randomized classifier. The randomized classifier draws a hypothesis $h$ from ${\cal H}$ according to $\rho$ at each round of the game and applies it to make the prediction on the next sample. PAC-Bayesian analysis supplied generalization guarantees for many influential machine learning algorithms, including support vector machines \cite{LST02, McA03}, linear classifiers \cite{GLLM09}, and clustering-based models \cite{ST10}, to name just a few of them. 

We show that PAC-Bayesian analysis can be extended to martingales. A combination of PAC-Bayesian analysis with Hoeffding-Azuma's inequality was applied by Lever et. al \cite{LLST10} in the analysis of U-statistics. The results presented here are both tighter and more general, and make it possible to apply PAC-Bayesian analysis in new domains, such as, for example, reinforcement learning \cite{SAL+11}.

\section{Main Results}

We first present our new inequalities for individual martingales, and then present the inequalities for weighted averages of martingales. All the proofs are provided in the appendix.

\subsection{Inequalities for Individual Martingales}

Our first lemma is a comparison inequality that bounds expectations of convex functions of martingale difference sequences shifted to the $[0,1]$ interval by expectations of the same functions of independent Bernoulli random variables. The lemma generalizes a previous result by Maurer for independent random variables \cite{Mau04}. The lemma uses the following notation: for a sequence of random variables $X_1,\dots,X_n$ we use $X_1^i := X_1,\dots,X_i$ to denote the first $i$ elements of the sequence.

\begin{lemma}
\label{lem:Martin}
Let $X_1,\dots,X_n$ be a sequence of random variables, such that $X_i \in [0,1]$ with probability 1 and $\mathbb E [X_i|X_1^{i-1}] = b_i$ for $i=1,\dots,n$. Let $Y_1,\dots,Y_n$ be independent Bernoulli random variables, such that $\mathbb E [Y_i] = b_i$. Then for any convex function $f:[0,1]^n \rightarrow \mathbb R:$
\[
\mathbb E \left [f(X_1,\dots,X_n)\right] \leq \mathbb E \left [f(Y_1,\dots,Y_n) \right].
\]
\end{lemma}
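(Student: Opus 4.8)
The plan is to prove the inequality by a hybrid argument, replacing the variables $X_i$ one at a time by the corresponding Bernoulli variables $Y_i$, moving from right to left (i.e.\ from $X_n$ down to $X_1$), and showing that each replacement can only increase the expectation. The key structural fact I would exploit is that a convex function on $[0,1]^n$, viewed as a function of one coordinate with the others held fixed, is convex on $[0,1]$; and for a convex function $g$ on $[0,1]$, the value at a point $x\in[0,1]$ is dominated by the linear interpolation between the endpoints, namely $g(x)\le (1-x)g(0)+x\,g(1)$. This is exactly the inequality that relates an arbitrary $[0,1]$-valued variable with given mean to a Bernoulli variable with the same mean.

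First I would set up the telescoping. Define the hybrid random vectors $Z^{(k)}=(Y_1,\dots,Y_k,X_{k+1},\dots,X_n)$, so that $Z^{(0)}=(X_1,\dots,X_n)$ and $Z^{(n)}=(Y_1,\dots,Y_n)$. The goal reduces to showing $\mathbb E[f(Z^{(k-1)})]\le \mathbb E[f(Z^{(k)})]$ for each $k$. To carry out the $k$-th step I would condition on everything except the $k$-th coordinate. The subtle point is the order of conditioning: I replace $X_k$ while $X_1,\dots,X_{k-1}$ have \emph{already} been replaced by $Y_1,\dots,Y_{k-1}$, which are independent of the $X_i$'s, yet the conditional mean hypothesis $\mathbb E[X_k\mid X_1^{k-1}]=b_k$ is stated only with respect to the original $X$ history. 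I would therefore argue at the $k$-th step by first conditioning on $X_1^{k-1}$ to apply the endpoint bound to the single variable $X_k$ (whose conditional mean is $b_k$), and then note that after this bound the dependence on $X_k$ has been linearized away, so the $X_1^{k-1}$ can be freely swapped for the independent $Y_1^{k-1}$.

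Concretely, for fixed values of the remaining coordinates write $g(x):=f(Y_1,\dots,Y_{k-1},x,X_{k+1},\dots,X_n)$, which is convex in $x$. By the endpoint bound, $g(X_k)\le (1-X_k)g(0)+X_k\,g(1)$ pointwise. Taking $\mathbb E[\,\cdot\mid X_1^{k-1}, Y_1^{k-1}, X_{k+1}^n]$ and using $\mathbb E[X_k\mid\cdots]=b_k$ (which I need to justify still holds after the extra conditioning, since $Y_1^{k-1}$ and $X_{k+1}^n$ carry no information forcing a different conditional mean) replaces $X_k$ by a Bernoulli$(b_k)$ variable in expectation, giving $\mathbb E[g(X_k)\mid\cdots]\le (1-b_k)g(0)+b_k g(1)=\mathbb E[g(Y_k)]$. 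Averaging over the remaining variables then yields $\mathbb E[f(Z^{(k-1)})]\le\mathbb E[f(Z^{(k)})]$, and chaining the $n$ inequalities gives the lemma.

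The main obstacle I anticipate is precisely the conditioning bookkeeping in the previous paragraph: making rigorous the claim that when I bound $X_k$, its conditional expectation equals $b_k$ even after I have already swapped $X_1^{k-1}$ for the independent $Y_1^{k-1}$ and am simultaneously conditioning on the future $X_{k+1}^n$. The hypothesis only gives $\mathbb E[X_k\mid X_1^{k-1}]=b_k$, so the clean way around this is to perform the single-variable convexity step \emph{inside} the conditional expectation given the genuine past $X_1^{k-1}$ before introducing any $Y$'s or futures, and to set up the induction so that at each stage the object being bounded is a convex function of $X_k$ whose other arguments are measurable with respect to information independent of $X_k$. Handling this order of operations carefully — rather than the convexity inequality itself, which is elementary — is the crux of the argument.
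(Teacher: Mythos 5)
Your high-level idea---replace one coordinate at a time using the endpoint bound $g(x)\le(1-x)g(0)+x\,g(1)$---is sound, and your opening sentence even names the correct order (``from $X_n$ down to $X_1$''). But the telescoping you actually set up goes the other way: in $Z^{(k)}=(Y_1,\dots,Y_k,X_{k+1},\dots,X_n)$, the step $Z^{(k-1)}\to Z^{(k)}$ replaces $X_k$ while the genuine future $X_{k+1},\dots,X_n$ is still present as an argument of $f$. That step needs $\mathbb E[X_k\mid Y_1^{k-1},X_{k+1}^n]=b_k$, and your parenthetical claim that $X_{k+1}^n$ ``carries no information forcing a different conditional mean'' is false: the hypothesis constrains only $\mathbb E[X_k\mid X_1^{k-1}]$, and conditioning on \emph{later} variables can change the conditional mean of $X_k$. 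This is not a bookkeeping issue that more care can repair---the intermediate inequality itself fails. Counterexample ($n=2$): let $X_1\sim\mathrm{Bernoulli}(1/2)$; given $X_1=0$ set $X_2=1/2$ deterministically, and given $X_1=1$ let $X_2\sim\mathrm{Bernoulli}(1/2)$, so $b_1=b_2=1/2$; take the convex function $f(x,y)=\max(x+2y-2,\,0)$. Then
\[
\mathbb E\left[f(X_1,X_2)\right]=\tfrac14,\qquad
\mathbb E\left[f(Y_1,X_2)\right]=\tfrac18,\qquad
\mathbb E\left[f(Y_1,Y_2)\right]=\tfrac14,
\]
so your first link $\mathbb E[f(Z^{(0)})]\le\mathbb E[f(Z^{(1)})]$ is violated, even though the lemma's conclusion holds (here with equality).

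The fix is to run the hybrids in the order your first paragraph announces: define $W^{(k)}=(X_1,\dots,X_k,Y_{k+1},\dots,Y_n)$ (with the $Y$'s constructed independent of all the $X$'s on a product space) and prove $\mathbb E[f(W^{(k)})]\le\mathbb E[f(W^{(k-1)})]$, descending from $W^{(n)}=(X_1,\dots,X_n)$ to $W^{(0)}=(Y_1,\dots,Y_n)$. At the step that replaces $X_k$, the other arguments of $f$ are the genuine past $X_1^{k-1}$ and the independent $Y_{k+1}^n$, so
\[
\mathbb E\left[X_k\,\middle|\,X_1^{k-1},Y_{k+1}^n\right]=\mathbb E\left[X_k\,\middle|\,X_1^{k-1}\right]=b_k,
\]
and your endpoint-bound computation goes through verbatim. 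With this ordering your argument is correct and is essentially the paper's proof in incremental form: the paper applies the multilinear interpolation of $f$ over all vertices of $\{0,1\}^n$ at once and then peels off conditional expectations from $X_n$ down to $X_1$, which is exactly this last-to-first replacement.
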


Let $\kl(p\|q) = p \ln \frac{p}{q} + (1-p) \ln \frac{1-p}{1-q}$ be an abbreviation for $\KL\left([p, 1-p]\middle\|[q, 1-q]\right)$, where $[p, 1-p]$ and $[q, 1-q]$ are Bernoulli distributions with biases $p$ and $q$, respectively. By Pinsker's inequality \cite{CT91},
\[
|p - q| \leq \sqrt{\kl(p\|q)/2},
\]
which means that a bound on $\kl(p\|q)$ implies a bound on the absolute difference between the biases of the Bernoulli distributions.

We apply Lemma \ref{lem:Martin} in order to derive the following inequality, which is an interesting generalization of an analogous result for i.i.d.\ variables. The result is based on the method of types in information theory \cite{CT91}.
\begin{lemma}
\label{lem:Ekl}
Let $X_1,\dots,X_n$ be a sequence of random variables, such that $X_i \in [0,1]$ with probability 1 and $\mathbb E [X_i|X_1^{i-1}] = b$. Let $S_n := \sum_{i=1}^n X_i$. Then:
\begin{equation}
\label{eq:Ekl}
\mathbb E \left [ e^{n\,\kl \left(\frac{1}{n} S_n \middle\| b \right)} \right ]\leq n+1.
\end{equation}
\end{lemma}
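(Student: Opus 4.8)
The plan is to use Lemma~\ref{lem:Martin} to reduce the claim to the i.i.d.\ Bernoulli case and then evaluate the resulting expectation explicitly by the method of types. First I would set $f(x_1,\dots,x_n) = e^{n\,\kl\left(\frac{1}{n}\sum_{i=1}^n x_i \middle\| b\right)}$ and verify that $f$ is convex on $[0,1]^n$. This holds because $p \mapsto \kl(p\|b)$ is convex in its first argument (it is a relative entropy), so $p \mapsto n\,\kl(p\|b)$ is convex; composing with the increasing convex map $t \mapsto e^t$ preserves convexity; and composing with the affine averaging map $(x_1,\dots,x_n)\mapsto \frac{1}{n}\sum_i x_i$ preserves it again. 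Applying Lemma~\ref{lem:Martin} with $b_i = b$ for all $i$ then yields
\[
\mathbb E\left[e^{n\,\kl\left(\frac{1}{n}S_n\middle\|b\right)}\right] \leq \mathbb E\left[e^{n\,\kl\left(\frac{1}{n}T_n\middle\|b\right)}\right],
\]
where $T_n = \sum_{i=1}^n Y_i$ and the $Y_i$ are independent Bernoulli variables of mean $b$, so that $T_n$ has a Binomial$(n,b)$ distribution.

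Next I would expand the right-hand side as a finite sum over $k=0,\dots,n$, where $\{T_n = k\}$ has probability $\binom{n}{k}b^k(1-b)^{n-k}$, and substitute the closed form
\[
e^{n\,\kl\left(\frac{k}{n}\middle\|b\right)} = \frac{(k/n)^k\,((n-k)/n)^{n-k}}{b^k(1-b)^{n-k}}.
\]
The crucial observation is that the factors $b^k(1-b)^{n-k}$ cancel, leaving the $b$-independent expression
\[
\mathbb E\left[e^{n\,\kl\left(\frac{1}{n}T_n\middle\|b\right)}\right] = \sum_{k=0}^n \binom{n}{k}\left(\frac{k}{n}\right)^k\left(\frac{n-k}{n}\right)^{n-k},
\]
where the boundary terms $k=0$ and $k=n$ use the convention $0^0 = 1$, consistent with $0\ln 0 = 0$ in the definition of $\kl$.

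Finally I would bound each summand. Each term $\binom{n}{k}\left(\frac{k}{n}\right)^k\left(\frac{n-k}{n}\right)^{n-k}$ is exactly the probability that a Binomial$(n, k/n)$ variable equals $k$, and is therefore at most $1$; summing the $n+1$ terms gives the bound $n+1$. The degenerate cases $b\in\{0,1\}$ can be handled separately, since then the $X_i$ are almost surely constant, $S_n/n = b$, and the left-hand side equals $1 \leq n+1$. The only genuine subtlety is the convexity check in the first step; once Lemma~\ref{lem:Martin} is invoked, what remains is the standard method-of-types counting argument, and the cancellation of $b$ is precisely what makes the bound uniform in the mean $b$.
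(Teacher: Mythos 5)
Your proof is correct and follows essentially the same route as the paper: check that $e^{n\,\kl(\cdot\|b)}$ composed with the averaging map is convex, apply Lemma~\ref{lem:Martin} to reduce to independent Bernoulli variables, and bound the resulting Binomial expectation by $n+1$ via the method of types. The only difference is that the paper delegates the Bernoulli step to a cited result (Lemma~\ref{lem:Laplace}, proved in external references), whereas you prove it explicitly --- your expansion with the cancellation of $b^k(1-b)^{n-k}$ and the bound of each term by a Binomial$(n,k/n)$ probability is precisely that standard argument, so your write-up is simply a self-contained version of the paper's proof.
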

Note
that in Lemma \ref{lem:Ekl} the conditional expectation $\mathbb E[X_i|X_1^{i-1}]$ is identical for all $i$, whereas in Lemma \ref{lem:Martin} there is no such restriction. Combination of Lemma \ref{lem:Ekl} with Markov's inequality leads to the following analog of Hoeffding-Azuma inequality. 
\begin{corollary}
\label{cor:kl}
Let $X_1,\dots,X_n$ be as in Lemma \ref{lem:Ekl}. Then, for any $\delta \in (0,1)$, with probability greater than $1-\delta$:
\begin{equation}
\label{eq:kl}
\kl \left(\frac{1}{n} S_n \middle\| b \right) \leq \frac{1}{n}\ln\frac{n+1}{\delta}.
\end{equation}
\end{corollary}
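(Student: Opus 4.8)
The plan is to derive Corollary~\ref{cor:kl} from Lemma~\ref{lem:Ekl} by a single application of Markov's inequality to the nonnegative random variable $Z := e^{n\,\kl\left(\frac{1}{n}S_n\middle\|b\right)}$. Since the exponential of a real quantity is always positive and Lemma~\ref{lem:Ekl} gives us the finite bound $\mathbb E[Z] \leq n+1$, the variable $Z$ is exactly of the form to which Markov's inequality applies.

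First I would fix $\delta \in (0,1)$ and apply Markov's inequality in the form $\mathbb P(Z \geq t) \leq \mathbb E[Z]/t$. The natural choice is $t = (n+1)/\delta$, which yields
\[
\mathbb P\left(Z \geq \frac{n+1}{\delta}\right) \leq \frac{\mathbb E[Z]}{(n+1)/\delta} \leq \frac{(n+1)\delta}{n+1} = \delta,
\]
where the second inequality uses Lemma~\ref{lem:Ekl}. Passing to the complementary event, with probability greater than $1-\delta$ we have $Z < (n+1)/\delta$, i.e.
\[
e^{n\,\kl\left(\frac{1}{n}S_n\middle\|b\right)} < \frac{n+1}{\delta}.
\]

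Next I would take logarithms on both sides, which is valid and monotone since both sides are positive, to obtain $n\,\kl\left(\frac{1}{n}S_n\middle\|b\right) < \ln\frac{n+1}{\delta}$, and then divide through by $n > 0$ to recover exactly the claimed bound \eqref{eq:kl}. This completes the argument.

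There is essentially no obstacle here: all the analytic work has already been carried out in establishing Lemma~\ref{lem:Ekl}, whose proof controls the moment generating function of the $\kl$-divergence via the method of types. The only points requiring minor care are that the event in the Corollary is stated with a strict inequality and ``probability greater than $1-\delta$'' (so I would track the strict versus non-strict inequalities through the complement step, which Markov's inequality accommodates), and that $Z$ is genuinely integrable so that $\mathbb E[Z]$ is finite, which is precisely what Lemma~\ref{lem:Ekl} guarantees.
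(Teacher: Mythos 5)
Your proposal is correct and follows exactly the paper's own argument: apply Markov's inequality to the nonnegative variable $e^{n\,\kl\left(\frac{1}{n}S_n\middle\|b\right)}$, invoke Lemma \ref{lem:Ekl} to bound its expectation by $n+1$, then take logarithms and divide by $n$. There is nothing to add.
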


$S_n$ is a terminal point of a random walk with bias $b$ after $n$ steps. By combining Corollary \ref{cor:kl} with Pinsker's inequality we can obtain a more explicit bound on the deviation of the terminal point from its expected value, $|S_n - bn| \leq \sqrt{\frac{n}{2} \ln \frac{n+1}{\delta}}$, which is similar to the result we can obtain by applying Hoeffding-Azuma's inequality. However, in certain situations the less explicit bound in the form of $\kl$ is significantly tighter than Hoeffding-Azuma's inequality and it can also be tighter than Bernstein's inequality. A detailed comparison is provided in Section \ref{sec:comparison}.

\subsection{PAC-Bayesian Inequalities for Weighted Averages of Martingales}

Next, we present several inequalities that control the concentration of weighted averages of multiple simultaneously evolving and interdependent martingales. The first result shows that the classical PAC-Bayesian theorem for independent random variables \cite{See02} holds in the same form for martingales. The result is based on combination of Donsker-Varadhan's variational formula for relative entropy with Lemma \ref{lem:Ekl}. In order to state the theorem we need a few definitions. 

Let $({\cal H}, {\cal B})$ be a probability space. Let $\bar X_1,\dots,\bar X_n$ be a sequence of random functions, such that $\bar X_i : {\cal H} \rightarrow [0,1]$. Assume that $\mathbb E[\bar X_i| \bar X_1,\dots, \bar X_{i-1}] = \bar b$, where $\bar b : {\cal H} \rightarrow [0,1]$ is a deterministic function (possibly unknown). This means that $\mathbb E[\bar X_i(h)|\bar X_1,\dots,\bar X_{i-1}] = \bar b(h)$ for each $i$ and $h$. Note that for each $h \in {\cal H}$ the sequence $\bar X_1(h), \dots, \bar X_n(h)$ satisfies the condition of Lemma \ref{lem:Ekl}.

Let $\bar S_n := \sum_{i = 1}^n \bar X_i$. In the following theorem we are bounding the mean of $\bar S_n$ with respect to any probability measure $\rho$ over ${\cal H}$.

\begin{theorem}[PAC-Bayes-kl Inequality]
\label{thm:PAC-Bayes-kl}
Fix a reference distribution $\pi$ over ${\cal H}$. Then, for any $\delta \in (0,1)$, with probability greater than $1-\delta$ over $\bar X_1, \dots, \bar X_n$, for all distributions $\rho$ over ${\cal H}$ simultaneously:
\begin{equation}
\kl\left(\left \langle \frac{1}{n} \bar S_n, \rho \right \rangle \middle\| \langle \bar b, \rho \rangle \right) \leq \frac{\KL(\rho\|\pi) + \ln \frac{n+1}{\delta}}{n}.
\label{eq:PAC-Bayes-kl}
\end{equation}
\end{theorem}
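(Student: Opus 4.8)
The plan is to combine the change-of-measure inequality \eqref{eq:basic} with Lemma \ref{lem:Ekl}, following the standard PAC-Bayesian recipe but keeping careful track of the fact that here the relevant exponent is itself a random function of the sample. First I would instantiate \eqref{eq:basic} with the sample-dependent function $\phi(h) := n\,\kl\!\left(\frac{1}{n}\bar S_n(h) \middle\| \bar b(h)\right)$, which is well-defined for each $h$ by the setup preceding the theorem. Since \eqref{eq:basic} is a purely deterministic relation, for \emph{every} realization of $\bar X_1,\dots,\bar X_n$ it holds simultaneously for all $\rho$ that
\[
\langle \phi, \rho \rangle \leq \KL(\rho\|\pi) + \ln \langle e^{\phi}, \pi \rangle .
\]

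Next I would lower-bound the left-hand side. Because $\kl(p\|q)$ is jointly convex in $(p,q)$, Jensen's inequality applied with respect to $\rho$ gives
\[
\tfrac{1}{n}\langle \phi, \rho \rangle = \left\langle \kl\!\left(\tfrac{1}{n}\bar S_n \middle\| \bar b\right), \rho \right\rangle \geq \kl\!\left(\left\langle \tfrac{1}{n}\bar S_n, \rho \right\rangle \middle\| \langle \bar b, \rho \rangle\right),
\]
so the left-hand side is at least $n\,\kl\!\left(\langle \frac{1}{n}\bar S_n, \rho\rangle \,\middle\|\, \langle \bar b, \rho\rangle\right)$, which is exactly the quantity we wish to bound.

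It then remains to control the single random scalar $\langle e^{\phi}, \pi\rangle$ on the right-hand side; this is the crucial step, and it is where the uncountability of ${\cal H}$ is handled without any union bound. Since $\pi$ is fixed in advance of the sample and $e^{\phi}\geq 0$, Tonelli's theorem lets me interchange the expectation over the sample with the integral against $\pi$:
\[
\mathbb E\big[\langle e^{\phi}, \pi\rangle\big] = \Big\langle \mathbb E\big[e^{n\,\kl(\frac{1}{n}\bar S_n(h)\|\bar b(h))}\big], \pi \Big\rangle \leq \langle n+1, \pi\rangle = n+1,
\]
where the inequality applies Lemma \ref{lem:Ekl} pointwise in $h$ (each fixed $h$ produces a sequence of exactly the type required there). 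A single application of Markov's inequality to $\langle e^{\phi}, \pi\rangle$ then yields, with probability at least $1-\delta$, the bound $\langle e^{\phi}, \pi\rangle \leq (n+1)/\delta$, hence $\ln \langle e^{\phi}, \pi\rangle \leq \ln\frac{n+1}{\delta}$. Substituting into the deterministic inequality above and dividing by $n$ gives the claim, valid simultaneously for all $\rho$ on the same high-probability event.

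The main obstacle I anticipate is not any single computation but the bookkeeping that makes the ``simultaneously for all $\rho$'' conclusion legitimate: all the randomness must be isolated into the one scalar $\langle e^{\phi},\pi\rangle$ \emph{before} Markov is invoked, and the Tonelli interchange must be justified by the priority of $\pi$ over the sample together with the nonnegativity of $e^{\phi}$. Once this structure is in place, the remaining ingredients — the instantiation of \eqref{eq:basic}, the Jensen step via joint convexity of $\kl$, and the final Markov bound — are routine.
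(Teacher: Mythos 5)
Your proof is correct and follows essentially the same route as the paper's own proof: the same choice of $\phi(h) = n\,\kl\left(\frac{1}{n}\bar S_n(h)\middle\|\bar b(h)\right)$, the same Jensen step via convexity of $\kl$, the change-of-measure inequality, a single application of Markov's inequality, and the interchange of the sample expectation with the integral against the fixed $\pi$ followed by a pointwise application of Lemma \ref{lem:Ekl}. The only cosmetic difference is ordering — you bound $\mathbb{E}\left[\langle e^{\phi},\pi\rangle\right]$ before invoking Markov, whereas the paper invokes Markov first and then pushes the expectation inside the dot product — which is mathematically equivalent.
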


By Pinsker's inequality, Theorem \ref{thm:PAC-Bayes-kl} implies that
\begin{align}
\left|\left \langle \frac{1}{n} \bar S_n, \rho \right \rangle - \langle \bar b, \rho\rangle \right| &= \left|\left \langle \left (\frac{1}{n} \bar S_n - \bar b \right ), \rho \right \rangle \right| \notag\\
&\leq \sqrt{\frac{\KL(\rho\|\pi) + \ln \frac{n+1}{\delta}}{2n}},
\label{eq:PAC-Bayes-Pinsker}
\end{align}
however, if $\left \langle \frac{1}{n} \bar S_n, \rho \right \rangle$ is close to zero or one, inequality \eqref{eq:PAC-Bayes-kl} is significantly tighter than \eqref{eq:PAC-Bayes-Pinsker}.

The next result is based on combination of Donsker-Varadhan's variational formula for relative entropy with Hoeffding-Azuma's inequality. This time let $\bar Z_1, \dots, \bar Z_n$ be a sequence of random functions, such that $\bar Z_i : {\cal H} \rightarrow \mathbb R$. Let $\bar Z_1^i$ be an abbreviation for a subsequence of the first $i$ random functions in the sequence. We assume that $\mathbb E[\bar Z_i | \bar Z_1^i] = \bar 0$. In other words, for each $h \in {\cal H}$ the sequence $Z_1(h),\dots,Z_n(h)$ is a martingale difference sequence. 

Let $\bar M_i := \sum_{j=1}^i \bar Z_j$. Then, for each $h \in {\cal H}$ the sequence $\bar M_1(h), \dots, \bar M_n(h)$ is a martingale. In the following theorems we bound the mean of $\bar M_n$ with respect to any probability measure $\rho$ on ${\cal H}$.

\begin{theorem}
\label{thm:PB-HA}
Assume that $\bar Z_i : {\cal H} \rightarrow [\alpha_i, \beta_i]$. Fix a reference distribution $\pi$ over ${\cal H}$ and $\lambda > 0$. Then, for any $\delta \in (0,1)$, with probability greater than $1 - \delta$ over $\bar Z_1^n$, for all distributions $\rho$ over ${\cal H}$ simultaneously:
\begin{equation}
|\langle \bar M_n, \rho \rangle| \leq \frac{\KL(\rho\|\pi) + \ln \frac{2}{\delta}}{\lambda} + \frac{\lambda}{8} \sum_{i=1}^n (\beta_i - \alpha_i)^2.
\label{eq:PB-HA}
\end{equation}
\end{theorem}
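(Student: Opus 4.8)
The plan is to follow the recipe announced in the introduction: combine the change-of-measure inequality \eqref{eq:basic-dot} with the Hoeffding-type bound on the moment-generating function of each individual martingale, and then turn the resulting deterministic relation into a high-probability statement by a single application of Markov's inequality. First I would instantiate \eqref{eq:basic-dot} with $\phi(h) = \lambda \bar M_n(h)$, which yields, for every realization of $\bar Z_1^n$ and simultaneously for all $\rho$,
\[
\lambda \langle \bar M_n, \rho \rangle \leq \KL(\rho\|\pi) + \ln \left\langle e^{\lambda \bar M_n}, \pi \right\rangle .
\]
The essential feature is that this relation is deterministic and holds for \emph{all} $\rho$ at once, so the only source of randomness left to control is the single scalar $\left\langle e^{\lambda \bar M_n}, \pi \right\rangle = \mathbb E_{h\sim\pi}\bigl[e^{\lambda \bar M_n(h)}\bigr]$.

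Next I would bound this scalar in expectation. Fix $h$. Since $\bar M_1(h),\dots,\bar M_n(h)$ is a martingale with increments $\bar Z_i(h)\in[\alpha_i,\beta_i]$ and zero conditional mean, Hoeffding's lemma applied to the conditional law of $\bar Z_i(h)$ gives $\mathbb E\bigl[e^{\lambda \bar Z_i(h)}\mid \bar Z_1^{i-1}\bigr]\le e^{\lambda^2(\beta_i-\alpha_i)^2/8}$; iterating this with the tower property, exactly as in the standard proof of Hoeffding-Azuma's inequality, yields $\mathbb E\bigl[e^{\lambda \bar M_n(h)}\bigr]\le e^{\frac{\lambda^2}{8}\sum_{i=1}^n(\beta_i-\alpha_i)^2}$ for each $h$. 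Because $\pi$ is fixed in advance, independently of the sample, Tonelli's theorem lets me interchange $\mathbb E_{\bar Z}$ with $\mathbb E_{h\sim\pi}$ on the nonnegative integrand, so
\[
\mathbb E_{\bar Z}\!\left[\left\langle e^{\lambda \bar M_n},\pi\right\rangle\right] = \mathbb E_{h\sim\pi}\!\left[\mathbb E_{\bar Z}\,e^{\lambda \bar M_n(h)}\right] \le e^{\frac{\lambda^2}{8}\sum_{i=1}^n(\beta_i-\alpha_i)^2}.
\]

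I would then apply Markov's inequality to $\left\langle e^{\lambda \bar M_n},\pi\right\rangle$ at level $\frac{2}{\delta}$ times this expectation: with probability at least $1-\delta/2$ one has $\ln\left\langle e^{\lambda \bar M_n},\pi\right\rangle \le \ln\frac{2}{\delta} + \frac{\lambda^2}{8}\sum_{i=1}^n(\beta_i-\alpha_i)^2$. Substituting into the change-of-measure inequality and dividing by $\lambda$ bounds $\langle \bar M_n,\rho\rangle$ from above for all $\rho$ simultaneously. To recover the absolute value, I would repeat the argument with $\phi(h)=-\lambda \bar M_n(h)$: since $-\bar M_n(h)$ is a martingale whose increments lie in $[-\beta_i,-\alpha_i]$, an interval of the same width $\beta_i-\alpha_i$, the same moment bound applies verbatim and a symmetric inequality holds with probability at least $1-\delta/2$. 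A union bound over the two sign choices, each costing $\delta/2$, then delivers \eqref{eq:PB-HA} with probability at least $1-\delta$.

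The step deserving the most care is the interchange of expectations in the second part. It relies crucially on $\pi$ being a fixed reference distribution chosen before the martingales are observed, so that $\left\langle e^{\lambda \bar M_n},\pi\right\rangle$ is a genuine scalar random variable whose mean can be computed by Tonelli and which then obeys Markov's inequality. This is precisely the mechanism that upgrades the deterministic, ``for all $\rho$'' change-of-measure bound into a \emph{uniform} high-probability guarantee over $\rho$. Everything else — Hoeffding's lemma, the tower-property iteration, and the symmetrization handling the absolute value — is routine.
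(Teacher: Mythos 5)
Your proposal is correct and takes essentially the same route as the paper: the authors omit this proof precisely because it mirrors the proof of Theorem~\ref{thm:PB-B}, i.e., change of measure with $\phi = \lambda \bar M_n$, Markov's inequality at level $\delta/2$, exchanging the expectation with the $\pi$-average (valid since $\pi$ is fixed), the Hoeffding--Azuma moment bound of Lemma~\ref{lem:HA} in place of Lemma~\ref{lem:Bernstein}, and a union bound over the two signs of $\bar M_n$. Your only (harmless) deviation is re-deriving the moment-generating-function bound from the conditional Hoeffding lemma and the tower property rather than invoking Lemma~\ref{lem:HA} directly.
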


We note that we cannot minimize inequality \eqref{eq:PB-HA} simultaneously for all $\rho$ by a single value of $\lambda$. In the following theorem we take a grid of $\lambda$-s in a form of a geometric sequence and for each value of $\KL(\rho\|\pi)$ we pick a value of $\lambda$ from the grid, which is the closest to the one that minimizes \eqref{eq:PB-HA}. The result is almost as good as what we could achieve if we would minimize the bound just for a single value of $\rho$.

\begin{theorem}[PAC-Bayes-Hoeffding-Azuma Inequality]
\label{thm:PB-HA+}
Assume that $\bar Z_1^n$ is as in Theorem \ref{thm:PB-HA}. Fix a reference distribution $\pi$ over ${\cal H}$. Take an arbitrary number $c > 1$. Then, for any $\delta \in (0,1)$, with probability greater than $1 - \delta$ over $\bar Z_1^n$, for all distributions $\rho$ over ${\cal H}$ simultaneously:
\begin{align}
|\langle \bar M_n,& \rho \rangle|\notag\\
&\leq \frac{1+c}{2\sqrt 2}\sqrt{\left (\KL(\rho\|\pi) + \ln \frac{2}{\delta} + \epsilon(\rho)\right )\sum_{i=1}^n (\beta_i - \alpha_i)^2},
\label{eq:PB-HA+}
\end{align}
where
\[
\epsilon(\rho) = \frac{\ln 2}{2 \ln c}\left (1 + \ln \left (\frac{\KL(\rho\|\pi)}{\ln \frac{2}{\delta}} \right ) \right ).
\]
\end{theorem}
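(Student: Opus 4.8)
The plan is to build on Theorem~\ref{thm:PB-HA}, whose right-hand side, for a fixed $\lambda$, is $g_\lambda(\rho) = \frac{\KL(\rho\|\pi)+\ln\frac2\delta}{\lambda} + \frac\lambda8 V$ with $V := \sum_{i=1}^n(\beta_i-\alpha_i)^2$. Minimizing $g_\lambda(\rho)$ over $\lambda>0$ for a single $\rho$ gives the minimizer $\lambda^\star(\rho) = \sqrt{8(\KL(\rho\|\pi)+\ln\frac2\delta)/V}$ and the ideal value $\frac1{\sqrt2}\sqrt{(\KL(\rho\|\pi)+\ln\frac2\delta)V}$, which is exactly the target bound \eqref{eq:PB-HA+} in the limit $c\to1$, $\epsilon(\rho)\to0$. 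The difficulty, already flagged in the text preceding the theorem, is that $\lambda^\star$ depends on $\rho$, whereas $\lambda$ in Theorem~\ref{thm:PB-HA} must be fixed before the union bound; no single $\lambda$ attains the ideal value for all $\rho$ at once.

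First I would lay down a geometric grid $\lambda_j := \lambda_0 c^j$ for $j=0,1,2,\dots$, with base $\lambda_0 := \sqrt{8\ln\frac2\delta / V}$, and invoke Theorem~\ref{thm:PB-HA} at each grid point with its own confidence level $\delta_j := \delta\,2^{-(j+1)}$. Since $\sum_{j\ge0}\delta_j = \delta$, a union bound over $j$ shows that, with probability at least $1-\delta$, the inequality
\[
|\langle\bar M_n,\rho\rangle| \le \frac{\KL(\rho\|\pi)+\ln\frac2\delta + (j+1)\ln2}{\lambda_j} + \frac{\lambda_j}8 V
\]
holds simultaneously for every $j$ and every $\rho$, where I used $\ln\frac2{\delta_j} = \ln\frac2\delta + (j+1)\ln2$. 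Because $\lambda^\star(\rho) = \lambda_0\sqrt{1 + \KL(\rho\|\pi)/\ln\frac2\delta} \ge \lambda_0$, the grid covers the whole relevant range of optima, and I would choose, for each $\rho$, the index $j(\rho)\approx \frac{\ln(\KL(\rho\|\pi)/\ln\frac2\delta)}{2\ln c}$ for which $\lambda_{j(\rho)}$ brackets the optimum from above within one factor of $c$.

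With this choice two effects combine. Writing $N := \KL(\rho\|\pi)+\ln\frac2\delta+\epsilon(\rho)$ for the target numerator, which dominates the grid-$j$ numerator once $(j+1)\ln2\le\epsilon(\rho)$, and using that the rounding forces $\lambda_{j(\rho)}\in[\sqrt{8N/V},\,c\sqrt{8N/V}]$, the two terms of the displayed bound are at most $\frac1{2\sqrt2}\sqrt{NV}$ and $\frac{c}{2\sqrt2}\sqrt{NV}$ respectively, which together produce the prefactor $\frac{1+c}{2\sqrt2}$. The union-bound penalty $(j+1)\ln2$, evaluated at $j(\rho)$, is precisely what supplies the additive term $\epsilon(\rho) = \frac{\ln2}{2\ln c}\bigl(1 + \ln(\KL(\rho\|\pi)/\ln\frac2\delta)\bigr)$. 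Taking the minimum of the displayed bound over $j$ and substituting $j=j(\rho)$ then yields \eqref{eq:PB-HA+}.

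I expect the main obstacle to be the bookkeeping in this last step. The index $j(\rho)$ that is optimal is defined with respect to the numerator \emph{after} it has been inflated by $(j+1)\ln2$, which itself depends on $j$, so the choice is mildly circular; the clean way to resolve it is to pick $j(\rho)$ against the fixed target $N$ and then verify $(j+1)\ln2 \le \epsilon(\rho)$ for that $j$. Keeping the constants consistent through the rounding, so that the penalty lands at exactly $\frac{\ln2}{2\ln c}(1+\ln(\cdot))$ rather than a larger multiple, is the delicate part; by contrast, the remaining work is the elementary arithmetic of bounding $\frac{N}{\lambda_j}+\frac{\lambda_j}8V$ on the interval $\lambda_j\in[\lambda^\star,c\lambda^\star]$.
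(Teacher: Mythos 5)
Your proposal is correct and takes essentially the same route as the paper's proof: the same geometric grid $\lambda_j = c^j\sqrt{8\ln\frac{2}{\delta}\big/\sum_{i=1}^n(\beta_i-\alpha_i)^2}$ starting from the $\KL(\rho\|\pi)=0$ optimizer, the same weighted union bound with weights $2^{-(j+1)}$, the same choice of index driven by $\KL(\rho\|\pi)/\ln\frac{2}{\delta}$, and the same $\frac{U}{\lambda}+\frac{\lambda}{8}V$ arithmetic on the bracketing interval $[\lambda^\star, c\lambda^\star]$ producing the $\frac{1+c}{2\sqrt{2}}$ prefactor and the $\epsilon(\rho)$ term from the union-bound penalty. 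The mild circularity you flag is resolved in the paper exactly as you propose, by fixing the grid index as a function of $\KL(\rho\|\pi)/\ln\frac{2}{\delta}$ alone and then absorbing the resulting $(j+1)\ln 2$ penalty into $\epsilon(\rho)$.
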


Our last result is based on a combination of Donsker-Varadhan's variational formula with a Bernstein-type inequality for martingales. Let $\bar V_i: {\cal H} \rightarrow \mathbb R$ be such that $\bar V_i(h) := \sum_{j=1}^i \mathbb E \left [\bar Z_j(h)^2 \middle|\bar Z_1^{j-1} \right]$. In other words, $\bar V_i(h)$ is the variance of the martingale $\bar M_i(h)$ defined earlier. Let $\|\bar Z_i\|_\infty = \sup_{h \in {\cal H}} \bar Z_i(h)$ be the $L_\infty$ norm of $\bar Z_i$.

\begin{theorem}
\label{thm:PB-B}
Assume that $\|\bar Z_i\|_\infty \leq K$ for all $i$ with probability 1 and pick $\lambda$, such that $\lambda \leq 1/K$. Fix a reference distribution $\pi$ over ${\cal H}$. Then, for any $\delta \in (0,1)$, with probability greater than $1-\delta$ over $\bar Z_1^n$, for all distributions $\rho$ over ${\cal H}$ simultaneously:
\begin{equation}
|\langle \bar M_n, \rho \rangle| \leq \frac{\KL(\rho\|\pi) + \ln \frac{2}{\delta}}{\lambda} + (e-2) \lambda \langle \bar V_n, \rho\rangle.
\label{eq:PB-B}
\end{equation}
\end{theorem}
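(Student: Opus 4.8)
The plan is to follow the same PAC-Bayesian template used for Theorem~\ref{thm:PB-HA}, but to replace the Hoeffding--Azuma control of the moment generating function by a Bernstein-type control that brings in the conditional variance process $\bar V_n$. The only genuinely new ingredient is a per-martingale Bernstein moment bound; the passage from individual martingales to weighted averages is then handled exactly as before by Donsker--Varadhan's change-of-measure inequality \eqref{eq:basic-dot} together with Markov's inequality.

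First I would establish the key pointwise estimate. Fix $h \in \mathcal{H}$ and abbreviate $Z_j := \bar Z_j(h)$, $M_i := \bar M_i(h)$, $V_i := \bar V_i(h)$, and let $\mathcal{F}_{j-1}$ denote conditioning on $\bar Z_1^{j-1}$. Since $\lambda \le 1/K$ and $|Z_j| \le \|\bar Z_j\|_\infty \le K$, we have $\lambda Z_j \le 1$, so the elementary inequality $e^x \le 1 + x + (e-2)x^2$, valid for all $x \le 1$, applies with $x = \lambda Z_j$. Taking the conditional expectation and using $\mathbb E[Z_j \mid \mathcal{F}_{j-1}] = 0$ together with $1 + y \le e^y$ gives
\[
\mathbb E\!\left[e^{\lambda Z_j} \,\middle|\, \mathcal{F}_{j-1}\right] \le 1 + (e-2)\lambda^2\, \mathbb E\!\left[Z_j^2 \,\middle|\, \mathcal{F}_{j-1}\right] \le \exp\!\left((e-2)\lambda^2\, \mathbb E\!\left[Z_j^2 \,\middle|\, \mathcal{F}_{j-1}\right]\right).
\]
Consequently the process $L_i := \exp(\lambda M_i - (e-2)\lambda^2 V_i)$ satisfies $\mathbb E[L_i \mid \mathcal{F}_{i-1}] \le L_{i-1}$, i.e.\ it is a supermartingale with $L_0 = 1$, whence $\mathbb E[L_n] \le 1$. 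In other words, for every $h$,
\[
\mathbb E\!\left[\exp\!\left(\lambda \bar M_n(h) - (e-2)\lambda^2 \bar V_n(h)\right)\right] \le 1.
\]
The same computation applied to the martingale $-\bar M_n$ (which has the identical variance process) yields $\mathbb E[\exp(-\lambda \bar M_n(h) - (e-2)\lambda^2 \bar V_n(h))] \le 1$.

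Next I would carry out the PAC-Bayesian wrapping. Put $\phi_{\pm}(h) := \pm\lambda \bar M_n(h) - (e-2)\lambda^2 \bar V_n(h)$. Since $\pi$ is fixed independently of the data, Fubini's theorem and the two pointwise bounds give $\mathbb E[\langle e^{\phi_{\pm}}, \pi\rangle] = \langle \mathbb E[e^{\phi_{\pm}}], \pi\rangle \le 1$. Markov's inequality then shows that each of the events $\{\langle e^{\phi_{+}}, \pi\rangle \le 2/\delta\}$ and $\{\langle e^{\phi_{-}}, \pi\rangle \le 2/\delta\}$ fails with probability at most $\delta/2$, so by the union bound both hold with probability greater than $1-\delta$. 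On this event, applying \eqref{eq:basic-dot} with $\phi = \phi_{\pm}$ gives, simultaneously for all $\rho$,
\[
\pm\lambda \langle \bar M_n, \rho\rangle - (e-2)\lambda^2 \langle \bar V_n, \rho\rangle = \langle \phi_{\pm}, \rho\rangle \le \KL(\rho\|\pi) + \ln\langle e^{\phi_{\pm}}, \pi\rangle \le \KL(\rho\|\pi) + \ln\tfrac{2}{\delta}.
\]
Dividing by $\lambda > 0$ and rearranging each inequality yields $\pm\langle \bar M_n, \rho\rangle \le (\KL(\rho\|\pi) + \ln\frac{2}{\delta})/\lambda + (e-2)\lambda\langle \bar V_n, \rho\rangle$, and taking the maximum over the two signs gives \eqref{eq:PB-B}.

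I expect the main obstacle to be the pointwise Bernstein moment bound of the first step: verifying the elementary inequality $e^x \le 1 + x + (e-2)x^2$ for $x \le 1$ and seeing precisely why the constraint $\lambda \le 1/K$ is exactly what is needed to invoke it, since it guarantees $\lambda Z_j \le 1$ over the whole range of $Z_j$, and analogously $-\lambda Z_j \le 1$ for the lower tail. Once the constant $(e-2)$ and the supermartingale structure are in place, the remainder is the now-standard Donsker--Varadhan plus Markov argument, identical in form to the proof of Theorem~\ref{thm:PB-HA}, the only bookkeeping difference being the factor $2$ in $\ln\frac{2}{\delta}$ arising from the union bound over the two tails.
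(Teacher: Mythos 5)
Your proposal is correct and follows essentially the same route as the paper: the change-of-measure (Donsker--Varadhan) inequality with $\phi_{\pm} = \pm\lambda \bar M_n - (e-2)\lambda^2 \bar V_n$, Markov's inequality at level $\delta/2$ for each tail, and a union bound over the two signs. Your first step simply re-derives inline the paper's Lemma~\ref{lem:Bernstein} (proved in Appendix~\ref{app:back} via the same $e^x \le 1+x+(e-2)x^2$ supermartingale argument), correctly conditioning on the full history $\bar Z_1^{j-1}$ so that the interdependence across $h$ is handled exactly as the paper notes it must be.
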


As in the previous case, the right hand side of \eqref{eq:PB-B} cannot be minimized for all $\rho$ simultaneously by a single value of $\lambda$. Furthermore, $\bar V_n$ is a random function. In the following theorem we take a similar grid of $\lambda$-s, as we did in Theorem \ref{thm:PB-HA+}, and a union bound over the grid. Picking a value of $\lambda$ from the grid closest to the value of $\lambda$ that minimizes the right hand side of \eqref{eq:PB-B} yields almost as good result as we would get if we would minimize \eqref{eq:PB-B} for a single choice of $\rho$. In this approach the variance $\bar V_n$ can be replaced by a sample-dependent upper bound. For example, in importance-weighted sampling such an upper bound is derived from the reciprocal of the sampling distribution at each round \cite{SAL+11}.

\begin{theorem}[PAC-Bayes-Bernstein Inequality]
\label{thm:PB-B+}
Assume that $\|\bar Z_i\|_\infty \leq K$ for all $i$ with probability 1. Fix a reference distribution $\pi$ over ${\cal H}$. Pick an arbitrary number $c > 1$. Then, for any $\delta \in (0,1)$, with probability greater than $1-\delta$ over $\bar Z_1^n$, simultaneously for all distributions $\rho$ over ${\cal H}$  that satisfy
\begin{equation}
\label{eq:technical}
\sqrt{\frac{\KL(\rho\|\pi) + \ln \frac{2\nu}{\delta}}{(e-2) \langle \bar V_n, \rho \rangle}} \leq \frac{1}{K}
\end{equation}
we have
\begin{equation}
|\langle \bar M_n, \rho \rangle| \leq (1+c) \sqrt{(e-2) \langle \bar V_n, \rho \rangle \left (\KL(\rho\|\pi) + \ln \frac{2\nu}{\delta} \right)},
\label{eq:PB-B+}
\end{equation}
where
\begin{equation}
\label{eq:m}
\nu = \left \lceil \frac{\ln \left (\sqrt{\frac{(e-2)n}{\ln \frac{2}{\delta}}} \right )}{\ln c} \right \rceil + 1,
\end{equation}
and for all other $\rho$
\begin{equation}
|\langle \bar M_n, \rho \rangle| \leq 2 K \left ( \KL(\rho\|\pi) + \ln \frac{2\nu}{\delta} \right ).
\label{eq:else}
\end{equation}
\end{theorem}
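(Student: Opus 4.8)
The plan is to build on Theorem \ref{thm:PB-B}, which for a fixed $\lambda \le 1/K$ gives the $\rho$-uniform bound
\[
|\langle \bar M_n, \rho\rangle| \le \frac{\KL(\rho\|\pi) + \ln\frac{2}{\delta}}{\lambda} + (e-2)\lambda\langle \bar V_n,\rho\rangle.
\]
For a fixed $\rho$ the right-hand side is minimized at $\lambda^*(\rho) = \sqrt{(\KL(\rho\|\pi)+\ln\frac{2}{\delta})/((e-2)\langle \bar V_n,\rho\rangle)}$, which is exactly the quantity constrained in \eqref{eq:technical}: the technical condition says precisely that $\lambda^*(\rho)\le 1/K$, i.e.\ that the optimizer is admissible for Theorem \ref{thm:PB-B}. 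The obstruction is that $\lambda^*(\rho)$ depends both on $\rho$ and on the random quantity $\langle\bar V_n,\rho\rangle$, so I cannot simply substitute it and retain a bound that holds simultaneously for all $\rho$.

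First I would discretize $\lambda$ on a fixed, data-independent geometric grid anchored at the largest admissible value, $\lambda_j = \frac1K c^{-j}$ for $j = 0,\dots,\nu-1$, reaching down past every optimizer $\lambda^*(\rho)$ that can occur for $\rho$ obeying \eqref{eq:technical}. Using the deterministic bounds $0\le\langle\bar V_n,\rho\rangle\le nK^2$ and $\KL(\rho\|\pi)\ge 0$, and $\ln\frac{2\nu}{\delta}\ge\ln\frac2\delta$, every such optimizer lies in $[\lambda_{\min},1/K]$ with $\lambda_{\min}=\frac1K\sqrt{\ln\frac2\delta/((e-2)n)}$. Requiring the bottom grid point $\lambda_{\nu-1}$ to fall below $\lambda_{\min}$ forces $c^{\,\nu-1}\ge\sqrt{(e-2)n/\ln\frac2\delta}$, which is exactly the count $\nu$ in \eqref{eq:m}; this bookkeeping, where I deliberately use $\ln\frac2\delta$ rather than $\ln\frac{2\nu}{\delta}$ in the grid endpoints to avoid circularity, is the one delicate point.

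Next I would apply Theorem \ref{thm:PB-B} once for each grid value $\lambda_j$ at confidence $\delta/\nu$ and take a union bound over the $\nu$ points, so that with probability at least $1-\delta$ the inequality holds simultaneously for all $j$ and all $\rho$, now with $\ln\frac2\delta$ upgraded to $\ln\frac{2\nu}{\delta}$. For $\rho$ satisfying \eqref{eq:technical} the optimizer lies inside the grid range, so I can select the grid point with $\lambda_j\le\lambda^*(\rho)\le c\,\lambda_j$. Writing $A=\KL(\rho\|\pi)+\ln\frac{2\nu}{\delta}$ and $C=(e-2)\langle\bar V_n,\rho\rangle$, so that $\lambda^*=\sqrt{A/C}$, the two terms are controlled separately by $\frac{A}{\lambda_j}\le c\sqrt{AC}$ and $C\lambda_j\le\sqrt{AC}$, summing to the claimed $(1+c)\sqrt{(e-2)\langle\bar V_n,\rho\rangle(\KL(\rho\|\pi)+\ln\frac{2\nu}{\delta})}$ of \eqref{eq:PB-B+}.

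Finally, for $\rho$ violating \eqref{eq:technical} we have $\lambda^*(\rho)>1/K$, so on the admissible range the bound is still decreasing and the best choice is the top grid point $\lambda_0=1/K$. The violation of \eqref{eq:technical} is exactly $(e-2)\langle\bar V_n,\rho\rangle<K^2 A$, which lets me bound the variance term $C/K$ by $KA$; since the first term is also $KA$, substituting $\lambda=1/K$ yields $|\langle\bar M_n,\rho\rangle|\le 2K(\KL(\rho\|\pi)+\ln\frac{2\nu}{\delta})$, matching \eqref{eq:else}. I expect the main obstacle to be the first step: constructing a fixed grid that is nonetheless guaranteed to straddle a random, $\rho$-dependent optimizer, and pinning down $\nu$ so that the union-bound penalty $\ln\nu$ is absorbed cleanly into the final constants.
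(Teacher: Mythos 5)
Your proposal is correct and follows essentially the same route as the paper's proof: a geometric grid of $\lambda$-s with ratio $c$ covering $\bigl[\tfrac{1}{K}\sqrt{\ln\tfrac{2}{\delta}/((e-2)n)},\,\tfrac{1}{K}\bigr]$, a union bound over the $\nu$ grid points upgrading $\ln\tfrac{2}{\delta}$ to $\ln\tfrac{2\nu}{\delta}$, selection of the grid point within a factor $c$ of the $\rho$-dependent optimizer to get the $(1+c)\sqrt{UV}$ bound, and the substitution $\lambda = 1/K$ together with the negation of \eqref{eq:technical} for the remaining $\rho$. The only cosmetic difference is that you anchor the grid at $1/K$ and descend, while the paper ascends from the lower endpoint and appends $1/K$ as the final point; both yield the same $\nu$ and the same constants.
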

($\lceil x \rceil$ is the smallest integer value that is larger than $x$.)

\section{Comparison of the Inequalities}
\label{sec:comparison}

In this section we remind the reader of Hoeffding-Azuma's and Bernstein's inequalities for individual martingales and compare them with our new $\kl$-form inequality. Then, we compare inequalities for weighted averages of martingales with inequalities for individual martingales.

\subsection{Background}

We first recall Hoeffding-Azuma's inequality \cite{Hoe63, Azu67}. For a sequence of random variables $Z_1,\dots,Z_n$ we use $Z_1^i := Z_1,\dots,Z_i$ to denote the first $i$ elements of the sequence.
\begin{lemma}[Hoeffding-Azuma's Inequality]
\label{lem:HA}
Let $Z_1,\dots,Z_n$ be a martingale difference sequence, such that $Z_i \in [\alpha_i,\beta_i]$ with probability 1 and $\mathbb E[Z_i|Z_1^{i-1}] = 0$. Let $M_i = \sum_{j=1}^i Z_j$ be the corresponding martingale. Then for any $\lambda \in \mathbb R$:
\[
\mathbb E[e^{\lambda M_n}] \leq e^{(\lambda^2 / 8) \sum_{i=1}^n (\beta_i - \alpha_i)^2}.
\]
\end{lemma}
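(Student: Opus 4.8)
The plan is to reduce the bound on the moment-generating function $\mathbb E[e^{\lambda M_n}]$ to a product of one-step bounds, and to control each step by the conditional version of Hoeffding's lemma. First I would exploit the martingale structure by writing $M_n = M_{n-1} + Z_n$ and applying the tower property. Since $M_{n-1}$ is a function of $Z_1^{n-1}$, it factors out of the inner conditional expectation:
\[
\mathbb E[e^{\lambda M_n}] = \mathbb E\left[e^{\lambda M_{n-1}}\,\mathbb E\left[e^{\lambda Z_n} \mid Z_1^{n-1}\right]\right].
\]
If I can show that the inner conditional expectation is bounded almost surely by the \emph{deterministic} constant $e^{(\lambda^2/8)(\beta_n-\alpha_n)^2}$, then this constant pulls out of the outer expectation, leaving $\mathbb E[e^{\lambda M_{n-1}}]$; inducting on $n$ then telescopes the one-step factors into the claimed exponent $\frac{\lambda^2}{8}\sum_{i=1}^n(\beta_i-\alpha_i)^2$.

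The crux is therefore the conditional one-step bound, which is Hoeffding's lemma. For a fixed increment $Z_n \in [\alpha_n,\beta_n]$ with $\mathbb E[Z_n\mid Z_1^{n-1}]=0$, I would use convexity of $x \mapsto e^{\lambda x}$ to bound it pointwise by its chord on $[\alpha_n,\beta_n]$:
\[
e^{\lambda z} \leq \frac{\beta_n - z}{\beta_n - \alpha_n}\,e^{\lambda \alpha_n} + \frac{z - \alpha_n}{\beta_n - \alpha_n}\,e^{\lambda \beta_n}.
\]
Taking the conditional expectation and using $\mathbb E[Z_n\mid Z_1^{n-1}]=0$ annihilates the term linear in $z$ and yields the deterministic two-point expression $\frac{\beta_n}{\beta_n-\alpha_n}e^{\lambda\alpha_n} - \frac{\alpha_n}{\beta_n-\alpha_n}e^{\lambda\beta_n}$.

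It then remains to bound this two-point expression by $e^{(\lambda^2/8)(\beta_n-\alpha_n)^2}$. Substituting $p = -\alpha_n/(\beta_n-\alpha_n) \in [0,1]$ and $h = \lambda(\beta_n-\alpha_n)$, the expression equals $e^{\varphi(h)}$ with $\varphi(h) = -ph + \ln(1-p+pe^h)$. I expect this last step to be the main obstacle, though it is only a short calculus argument: $\varphi(0)=\varphi'(0)=0$ and $\varphi''(h) = t(1-t)$ with $t = pe^h/(1-p+pe^h) \in [0,1]$, so $\varphi''(h)\le 1/4$; Taylor's theorem then gives $\varphi(h)\le h^2/8 = (\lambda^2/8)(\beta_n-\alpha_n)^2$, closing the induction. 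As an alternative that reuses the machinery already developed in the paper, one could instead rescale each increment to $X_i = (Z_i-\alpha_i)/(\beta_i-\alpha_i)\in[0,1]$ and apply Lemma \ref{lem:Martin} to the convex function $f(x_1,\dots,x_n) = \exp\!\big(\lambda\sum_i(\alpha_i+(\beta_i-\alpha_i)x_i)\big)$; since $f(X_1,\dots,X_n)=e^{\lambda M_n}$, this dominates $\mathbb E[e^{\lambda M_n}]$ by the corresponding expectation over independent Bernoulli variables, where the product factorizes at once and only the two-point case of the calculus bound above is needed.
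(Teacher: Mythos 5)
Your proof is correct, but there is nothing in the paper to compare it against directly: the paper recalls Lemma \ref{lem:HA} as a classical background result, citing Hoeffding and Azuma, and only proves the Bernstein-type Lemma \ref{lem:Bernstein} in Appendix \ref{app:back}. What you have written is the standard proof of Hoeffding--Azuma, and its skeleton --- a deterministic bound on the one-step conditional moment-generating function $\mathbb E[e^{\lambda Z_n}\mid Z_1^{n-1}]$, pulled out of the outer expectation and telescoped by backward induction --- is exactly the skeleton the paper uses for Lemma \ref{lem:Bernstein}, where the one-step bound \eqref{eq:33} plays the role of your conditional Hoeffding lemma and \eqref{eq:34}--\eqref{eq:35} perform the same recursion. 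Your calculus step is sound: conditional mean zero forces $\alpha_n\le 0\le\beta_n$, hence $p=-\alpha_n/(\beta_n-\alpha_n)\in[0,1]$, and $\varphi(0)=\varphi'(0)=0$ together with $\varphi''=t(1-t)\le 1/4$ gives $\varphi(h)\le h^2/8$ by Taylor's theorem. Your alternative route is also valid and is arguably the more interesting one in the context of this paper: the rescaled increments $X_i=(Z_i-\alpha_i)/(\beta_i-\alpha_i)$ have deterministic conditional means $b_i=-\alpha_i/(\beta_i-\alpha_i)$ (a point worth stating explicitly, since Lemma \ref{lem:Martin} requires the $b_i$ to be constants), so Lemma \ref{lem:Martin} applies to your convex function $f$ and reduces the martingale statement to independent Bernoulli variables, after which the product factorizes and only the two-point case of Hoeffding's lemma remains. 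That derivation exhibits Hoeffding--Azuma as a corollary of the paper's own comparison inequality --- a connection the paper itself never draws, since it uses Lemma \ref{lem:Martin} only to obtain the $\kl$-type bound of Lemma \ref{lem:Ekl}.
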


By combining Hoeffding-Azuma's inequality with Markov's inequality and taking $\lambda = \sqrt{\frac{8\ln \frac{2}{\delta}}{\sum_{i=1}^n (\beta_i-\alpha_i)^2}}$ it is easy to obtain the following corollary.
\begin{corollary}
\label{cor:HA}
For $M_n$ defined in Lemma \ref{lem:HA} and $\delta \in (0,1)$, with probability greater than $1-\delta$:
\[
|M_n| \leq \sqrt{\frac{1}{2}\ln \left (\frac{2}{\delta} \right )\sum_{i=1}^n (\beta_i-\alpha_i)^2}.
\]
\end{corollary}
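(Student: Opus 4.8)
The plan is to apply the standard Chernoff bounding technique, using the moment-generating function bound of Lemma~\ref{lem:HA} in place of an exact moment computation. First I would fix $\lambda > 0$ and control the upper tail: by Markov's inequality applied to the nonnegative random variable $e^{\lambda M_n}$,
\[
\mathbb P\left(M_n \geq t\right) = \mathbb P\left(e^{\lambda M_n} \geq e^{\lambda t}\right) \leq e^{-\lambda t}\,\mathbb E\left[e^{\lambda M_n}\right] \leq \exp\left(-\lambda t + \frac{\lambda^2}{8}\sum_{i=1}^n (\beta_i - \alpha_i)^2\right),
\]
where the last inequality is exactly the bound supplied by Lemma~\ref{lem:HA}.

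Next I would obtain the matching lower-tail bound by symmetry. The sequence $-Z_1,\dots,-Z_n$ is again a martingale difference sequence with $\mathbb E[-Z_i \mid Z_1^{i-1}] = 0$, and $-Z_i \in [-\beta_i, -\alpha_i]$, an interval of the same width $\beta_i - \alpha_i$. Hence Lemma~\ref{lem:HA} applies verbatim to the martingale $-M_n = \sum_{j=1}^n (-Z_j)$, giving
\[
\mathbb P\left(M_n \leq -t\right) = \mathbb P\left(-M_n \geq t\right) \leq \exp\left(-\lambda t + \frac{\lambda^2}{8}\sum_{i=1}^n (\beta_i - \alpha_i)^2\right)
\]
with an identical right-hand side.

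Finally I would combine the two tails by a union bound, requiring each to be at most $\delta/2$, so that $\mathbb P(|M_n| \geq t) \leq \delta$. Substituting the prescribed value $\lambda = \sqrt{8\ln(2/\delta)/\sum_{i=1}^n(\beta_i-\alpha_i)^2}$ into the common exponent and simplifying shows that the threshold $t = \sqrt{\frac{1}{2}\ln(2/\delta)\sum_{i=1}^n(\beta_i-\alpha_i)^2}$ makes each tail equal to $\delta/2$; equivalently, this $\lambda$ is precisely the minimizer of the exponent for this $t$, which is why the resulting bound is the tightest attainable within the Chernoff method. Passing to the complementary event yields the claim.

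There is no genuine obstacle here: the only points requiring care are the factor of $2$ introduced by the two-sided union bound (which is what produces $\ln(2/\delta)$ rather than $\ln(1/\delta)$) and the verification that the stated $\lambda$ reproduces exactly the optimized exponent $-2t^2/\sum_{i=1}^n(\beta_i-\alpha_i)^2$. Both are routine substitutions, consistent with the paper's remark that the corollary follows easily from Lemma~\ref{lem:HA} and Markov's inequality.
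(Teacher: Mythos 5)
Your proof is correct and follows exactly the route the paper indicates: Chernoff bounding via Markov's inequality applied to $e^{\lambda M_n}$, a symmetric argument for the lower tail, a two-sided union bound producing the $\ln(2/\delta)$ factor, and substitution of $\lambda = \sqrt{8\ln(2/\delta)/\sum_{i=1}^n(\beta_i-\alpha_i)^2}$, which is indeed the optimizing choice. Nothing is missing; your calculation that this $\lambda$ makes each tail exactly $\delta/2$ checks out.
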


The next lemma is a Bernstein-type inequality \cite{Ber46, Fre75}. We provide the proof of this inequality in Appendix \ref{app:back}, the proof is a part of the proof of \cite[Theorem 1]{BLL+11}.
\begin{lemma}[Bernstein's Inequality]
\label{lem:Bernstein}
Let $Z_1,\dots,Z_n$ be a martingale difference sequence, such that $|Z_i| \leq K$ with probability 1 and $\mathbb E[Z_i|Z_1^{i-1}] = 0$. Let $M_i := \sum_{j=1}^i Z_j$ and let $V_i := \sum_{j=1}^i \mathbb E[(Z_j)^2|Z_1^{j-1}]$. Then for any $\lambda \in [0,\frac{1}{K}]$:
\[
\mathbb E\left[e^{\lambda M_n - (e-2) \lambda^2 V_n}\right] \leq 1.
\]
\end{lemma}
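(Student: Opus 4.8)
The plan is to show that the process $W_i := e^{\lambda M_i - (e-2)\lambda^2 V_i}$ is a supermartingale with respect to the filtration generated by $Z_1^i$, and then to iterate the resulting one-step inequality down to $W_0 = 1$. Since $M_i = M_{i-1} + Z_i$ and $V_i = V_{i-1} + \mathbb{E}[Z_i^2 \mid Z_1^{i-1}]$, and since both $W_{i-1}$ and $\mathbb{E}[Z_i^2 \mid Z_1^{i-1}]$ are determined by $Z_1^{i-1}$, I would factor
\[
\mathbb{E}\!\left[W_i \mid Z_1^{i-1}\right] = W_{i-1}\, e^{-(e-2)\lambda^2 \mathbb{E}[Z_i^2 \mid Z_1^{i-1}]}\, \mathbb{E}\!\left[e^{\lambda Z_i} \,\middle|\, Z_1^{i-1}\right],
\]
so that the whole problem reduces to controlling the conditional moment-generating function $\mathbb{E}[e^{\lambda Z_i} \mid Z_1^{i-1}]$.

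The key step, and the part I expect to be the main obstacle, is the elementary inequality $e^x \leq 1 + x + (e-2)x^2$, valid for every $x \leq 1$ (with equality at $x = 0$ and $x = 1$; one verifies it by checking that $(e^x - 1 - x)/x^2$ is increasing and equals $e-2$ at $x = 1$). This is precisely where the constant $e-2$ and the hypothesis $\lambda \leq 1/K$ are used: because $|Z_i| \leq K$ and $0 \leq \lambda \leq 1/K$, the argument satisfies $\lambda Z_i \leq \lambda K \leq 1$, so the inequality applies and yields $e^{\lambda Z_i} \leq 1 + \lambda Z_i + (e-2)\lambda^2 Z_i^2$.

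Taking the conditional expectation and using the martingale-difference property $\mathbb{E}[Z_i \mid Z_1^{i-1}] = 0$ to annihilate the linear term, I would obtain
\[
\mathbb{E}\!\left[e^{\lambda Z_i} \,\middle|\, Z_1^{i-1}\right] \leq 1 + (e-2)\lambda^2 \mathbb{E}[Z_i^2 \mid Z_1^{i-1}] \leq e^{(e-2)\lambda^2 \mathbb{E}[Z_i^2 \mid Z_1^{i-1}]},
\]
where the last step is the bound $1 + y \leq e^y$. Substituting this into the factorization above, the two exponential factors carrying $\mathbb{E}[Z_i^2 \mid Z_1^{i-1}]$ cancel exactly, giving $\mathbb{E}[W_i \mid Z_1^{i-1}] \leq W_{i-1}$. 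Finally, taking total expectations and iterating from $i = n$ down to $i = 1$ yields $\mathbb{E}[W_n] \leq \mathbb{E}[W_{n-1}] \leq \cdots \leq \mathbb{E}[W_0] = 1$, which is the assertion of the lemma. The whole argument is a standard exponential-supermartingale peeling; the only genuinely nontrivial ingredient is the scalar inequality controlling $e^x$ on $(-\infty,1]$, and everything else is bookkeeping.
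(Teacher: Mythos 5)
Your proposal is correct and follows essentially the same route as the paper's proof: the paper's recursive peeling of $Z_n, Z_{n-1},\dots,Z_1$ is precisely your supermartingale iteration, using the same factorization, the same scalar bound $e^x \leq 1 + x + (e-2)x^2$ for $x \leq 1$ (which is where $\lambda \leq 1/K$ enters), the martingale property to kill the linear term, and $1+y \leq e^y$ to absorb the conditional variance. The only difference is presentational — you name the process $W_i$ a supermartingale and iterate total expectations, while the paper unrolls the same computation inline.
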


By combining Lemma \ref{lem:Bernstein} with Markov's inequality we obtain that for any $\lambda \in [0, \frac{1}{K}]$ and $\delta \in (0,1)$, with probability greater than $1-\delta$:
\begin{equation}
\label{eq:lambda}
|M_n| \leq \frac{1}{\lambda}\ln \frac{2}{\delta} + \lambda (e-2) V_n.
\end{equation}

$V_n$ is a random variable and can be replaced by an upper bound. Inequality \eqref{eq:lambda} is minimized by $\lambda^* = \sqrt{\frac{\ln \frac{2}{\delta}}{(e-2) V_n}}$. Note that $\lambda^*$ depends on $V_n$ and is not accessible until we observe the entire sample. We can bypass this problem by constructing the same grid of $\lambda$-s, as the one used in the proof of Theorem \ref{thm:PB-B+}, and taking a union bound over it. Picking a value of $\lambda$ closest to $\lambda^*$ from the grid leads to the following corollary. In this bounding technique the upper bound on $V_n$ can be sample-dependent, since the bound holds simultaneously for all $\lambda$-s in the grid. Despite being a relatively simple consequence of Lemma \ref{lem:Bernstein}, we have not seen this result in the literature. The corollary is tighter than an analogous result by Beygelzimer et. al. \cite[Theorem 1]{BLL+11}.

\begin{corollary}
\label{cor:Bernstein}
For $M_n$ and $V_n$ as defined in Lemma \ref{lem:Bernstein}, $c > 1$ and $\delta \in (0,1)$, with probability greater than $1-\delta$, if 
\begin{equation}
\sqrt{\frac{\ln \frac{2\nu}{\delta}}{(e-2) V_n}} \leq \frac{1}{K}
\label{eq:technical1}
\end{equation}
then 
\[
|M_n| \leq (1+c) \sqrt{(e-2)V_n\ln \frac{2\nu}{\delta}},
\]
where $\nu$ is defined in \eqref{eq:m}, and otherwise
\[
|M_n| \leq 2 K \ln \frac{2\nu}{\delta}.
\]
\end{corollary}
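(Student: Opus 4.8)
The plan is to upgrade the single-$\lambda$ high-probability bound \eqref{eq:lambda}, namely $|M_n| \leq \frac{1}{\lambda}\ln\frac{2}{\delta} + \lambda(e-2)V_n$ valid for each fixed $\lambda \in [0,1/K]$, into a statement holding simultaneously over a geometric grid of $\lambda$-values fine enough that the grid point nearest the ideal choice $\lambda^\ast = \sqrt{\ln(2\nu/\delta)\,/\,((e-2)V_n)}$ is within a factor $c$ of it. The first observation I would use is the a priori bound $V_n = \sum_{j=1}^n \mathbb E[(Z_j)^2|Z_1^{j-1}] \leq nK^2$, which forces $\lambda^\ast \geq \frac{1}{K}\sqrt{\ln(2/\delta)\,/\,((e-2)n)} =: \lambda_{\min}$. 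Combined with the admissibility constraint $\lambda \leq 1/K$ coming from Lemma \ref{lem:Bernstein}, this confines every realizable value of $\lambda^\ast$ to the interval $[\lambda_{\min}, 1/K]$, whose logarithmic length is exactly $\ln\sqrt{(e-2)n\,/\,\ln(2/\delta)}$; this is precisely the quantity governing the number of grid points $\nu$ defined in \eqref{eq:m}.

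Concretely, I would take the grid $\lambda_j = \frac{1}{K}c^{-j}$ for $j = 0,1,\dots,\nu-1$, so that every $\lambda_j \leq 1/K$ (hence Lemma \ref{lem:Bernstein} applies to each), the top point is $\lambda_0 = 1/K$, and the choice of $\nu$ guarantees the bottom point $\lambda_{\nu-1}$ falls below $\lambda_{\min}$. Applying \eqref{eq:lambda} with confidence parameter $\delta/\nu$ at each grid point and taking a union bound yields, with probability at least $1-\delta$, the simultaneous estimate $|M_n| \leq \frac{1}{\lambda_j}\ln\frac{2\nu}{\delta} + \lambda_j(e-2)V_n$ for all $j$.

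The argument then splits on the technical condition \eqref{eq:technical1}, i.e.\ on whether $\lambda^\ast \leq 1/K$. If $\lambda^\ast \leq 1/K$, then since $\lambda^\ast \in [\lambda_{\min}, 1/K]$ lies within the span of the grid, I would pick the largest grid point not exceeding $\lambda^\ast$; geometric spacing forces $\lambda^\ast/c < \lambda_j \leq \lambda^\ast$. Substituting this, bounding $1/\lambda_j < c/\lambda^\ast$ in the first term and $\lambda_j \leq \lambda^\ast$ in the second, and using that $\lambda^\ast$ equalizes the two contributions (each reducing to $\sqrt{(e-2)V_n\ln(2\nu/\delta)}$) produces exactly $|M_n| \leq (1+c)\sqrt{(e-2)V_n\ln(2\nu/\delta)}$. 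In the complementary case $\lambda^\ast > 1/K$, I would instead evaluate at the admissible endpoint $\lambda_0 = 1/K$, giving $|M_n| \leq K\ln\frac{2\nu}{\delta} + \frac{(e-2)V_n}{K}$; the failed condition rearranges to $(e-2)V_n/K < K\ln(2\nu/\delta)$, so the variance term is dominated by the first and the bound collapses to $2K\ln\frac{2\nu}{\delta}$, as claimed.

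The step I expect to demand the most care is the calibration of the grid: checking that $\nu$ from \eqref{eq:m} is large enough to drive $\lambda_{\nu-1}$ strictly below $\lambda_{\min}$, so that a grid point genuinely sits just beneath every realizable $\lambda^\ast$, while keeping $\lambda_0 = 1/K$ so that Lemma \ref{lem:Bernstein} remains applicable at the top of the grid. Once the grid is pinned down and the elementary bound $V_n \leq nK^2$ is in hand, the factor $(1+c)$, the appearance of $\nu$ inside the logarithm, and the clean dichotomy all follow by routine substitution.
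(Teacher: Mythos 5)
Your proposal is correct and follows essentially the same route as the paper: the paper proves this corollary by constructing the same geometric grid of $\lambda$-s over $\left[\frac{1}{K}\sqrt{\ln\frac{2}{\delta}/((e-2)n)},\,\frac{1}{K}\right]$ as in the proof of Theorem \ref{thm:PB-B+} (using $V_n \leq nK^2$ to bound the range), taking a union bound over the $\nu$ grid points, picking the grid point within a factor $c$ of $\lambda^\ast$ when \eqref{eq:technical1} holds, and substituting $\lambda = 1/K$ otherwise. The only differences---anchoring the grid at $1/K$ descending rather than at the lower endpoint ascending, and choosing the grid point just below rather than just above $\lambda^\ast$---are immaterial and yield the identical $(1+c)$ factor and $2K\ln\frac{2\nu}{\delta}$ fallback.
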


The technical condition \eqref{eq:technical1} follows from the requirement of Lemma \ref{lem:Bernstein} that $\lambda \in [0,\frac{1}{K}]$.

\subsection{Comparison}

We first compare inequalities for individual martingales in Corollaries \ref{cor:kl}, \ref{cor:HA}, and \ref{cor:Bernstein}. 

\subsubsection*{Comparison of Inequalities for Individual Martingales}
The comparison between Corollaries \ref{cor:HA} and \ref{cor:Bernstein} is relatively  straightforward. We note that the assumption $\mathbb E[Z_i|Z_1^{i-1}] = 0$ implies that $\alpha_i \leq 0$ and that $V_n \leq \sum_{i=1}^n \max\{\alpha_i^2,\beta_i^2\} \leq \sum_{i=1}^n (\beta_i - \alpha_i)^2$. Hence, Corollary \ref{cor:Bernstein} (derived from Bernstein's inequality) matches Corollary \ref{cor:HA} (derived from Hoeffding-Azuma's inequality) up to minor constants and logarithmic factors
in the general case, and can be much tighter when the variance is small.

The comparison with the $\kl$ inequality in Corollary \ref{cor:kl} is a bit more involved. As we mentioned after Corollary \ref{cor:kl}, its combination with Pinsker's inequality implies that $|S_n - bn| \leq \sqrt{\frac{n}{2} \ln \frac{n+1}{\delta}}$, where $S_n - bn$ is a martingale corresponding to the martingale difference sequence $Z_i = X_i - b$. Thus, Corollary \ref{cor:kl} is at least as tight as Hoeffding-Azuma's inequality in Corollary \ref{cor:HA}, up to a factor of $\sqrt{\ln \frac{n+1}{2}}$. This is also true if $X_i \in [\alpha_i,\beta_i]$ (rather than $[0,1]$), as long as we can simultaneously project all $X_i$-s to the $[0,1]$ interval without losing too much.

Tighter upper bounds on the $\kl$ divergence show that in certain situations Corollary \ref{cor:kl} is actually much tighter than Hoeffding-Azuma's inequality. One possible application of Corollary \ref{cor:kl} is estimation of the value of the drift $b$ of a random walk from empirical observation $S_n$. If $S_n$ is close to zero, it is possible to use a tighter bound on $\kl$, which states that for $p > q$ we have $p \leq q + \sqrt{2 q\, \kl(q||p)} + 2 \kl(q||p)$ \cite{McA03}. From this inequality, we obtain that with probability greater than $1-\delta$:
\[
b \leq \frac{1}{n} S_n + \sqrt{\frac{\frac{2}{n} S_n \ln \frac{n+1}{\delta}}{n}} + \frac{2 \ln \frac{n+1}{\delta}}{n}.
\]
The above inequality is tighter than Hoeffding-Azuma inequality whenever $\frac{1}{n} S_n < 1/8$. Since $\kl$ is convex in each of its parameters, it is actually easy to invert it numerically, and thus avoid the need to resort to approximations in practice. In a similar manner, tighter bounds can be obtained when $S_n$ is close to $n$.

The comparison of $\kl$ inequality in Corollary \ref{cor:kl} with Bernstein's inequality in Corollary \ref{cor:Bernstein} is not as equivocal as the comparison with Hoeffding-Azuma's inequality. If there is a bound on $V_n$ that is significantly tighter than $n$, Bernstein's inequality can be significantly tighter than the $\kl$ inequality, but otherwise it can also be the opposite case. In the example of estimating a drift of a random walk without prior knowledge on its variance, if the empirical drift is close to zero or to $n$ the $\kl$ inequality is tighter. In this case the $\kl$ inequality is comparable with empirical Bernstein's bounds \cite{MSA08,AMS09,MP09}.

\subsubsection*{Comparison of Inequalities for Individual Martingales with PAC-Bayesian Inequalities for Weighted Averages of Martingales}

The ``price'' that is paid for considering weighted averages of multiple martingales is the KL-divergence $\KL(\rho\|\pi)$ between the desired mixture weights $\rho$ and the reference mixture weights $\pi$. (In the case of PAC-Bayes-Hoeffding-Azuma inequality, Theorem \ref{thm:PB-HA+}, there is also an additional minor term originating from the union bound over the grid of $\lambda$-s.) Note that for $\rho = \pi$ the KL term vanishes.

\section{Discussion}

We presented a comparison inequality that bounds expectation of a convex function of martingale difference type variables by expectation of the same function of independent Bernoulli variables. This inequality enables to reduce a problem of studying continuous dependent random variables on a bounded interval to a much simpler problem of studying independent Bernoulli random variables.

As an example of an application of our lemma we derived an analog of Hoeffding-Azuma's inequality for martingales. Our result is always comparable to Hoeffding-Azuma's inequality up to a logarithmic factor and in cases, where the empirical drift of a corresponding random walk is close to the region boundaries it is tighter than Hoeffding-Azuma's inequality by an order of magnitude. It can also be tighter than Bernstein's inequality for martingales, unless there is a tight bound on the martingale variance.

Finally, but most importantly, we presented a set of inequalities on concentration of weighted averages of multiple simultaneously evolving and interdependent martingales. These inequalities are especially useful for controlling uncountably many martingales, where standard union bounds cannot be applied. Martingales are one of the most basic and important tools for studying time-evolving processes and we believe that our results will be useful for multiple domains. One such application in analysis of importance weighted sampling in reinforcement learning was already presented in \cite{SAL+11}.


%

\appendices
\section{Proofs of the Results for Individual Martingales}

\begin{proof}[Proof of Lemma \ref{lem:Martin}]
The proof follows the lines of the proof of Maurer \cite[Lemma 3]{Mau04}. Any point $\bar x = (x_1,\dots,x_n) \in [0,1]^n$ can be written as a convex combination of the extreme points $\bar \eta = (\eta_1,\dots,\eta_n) \in \{0,1\}^n$ in the following way:
\[
\bar x = \sum_{\bar \eta \in \{0,1\}^n} \left (\prod_{i=1}^n [(1 - x_i)(1 - \eta_i) + x_i \eta_i ]\right ) \bar \eta.
\]
Convexity of $f$ therefore implies
\begin{equation}
f(\bar x) \leq \sum_{\bar \eta \in \{0,1\}^n} \left (\prod_{i=1}^n [(1 - x_i)(1 - \eta_i) + x_i \eta_i ]\right ) f(\bar \eta)
\label{eq:convexity}
\end{equation}
with equality if $\bar x \in \{0,1\}^n$. Let $X_1^i := X_1,\dots,X_i$ be the first $i$ elements of the sequence $X_1,\dots,X_n$. Let $W_i(\eta_i) = (1 - X_i) (1 - \eta_i) + X_i \eta_i$ and let $w_i(\eta_i) = (1 - b_i) (1 - \eta_i) + b_i \eta_i$. Note that by the assumption of the lemma:
\begin{align*}
\mathbb E [W_i(\eta_i)|X_1^{i-1}] &= \mathbb E [(1 - X_i) (1 - \eta_i) + X_i \eta_i |X_1^{i-1}]\\
&= (1 - b_i) (1 - \eta_i) + b_i \eta_i = w_i(\eta_i).
\end{align*}
By taking expectation of both sides of \eqref{eq:convexity} we obtain:
\begin{align}
&\mathbb E_{X_1^n} [f(X_1^n)] \leq \mathbb E_{X_1^n} \left [ \sum_{\bar \eta \in \{0,1\}^n} \left (\prod_{i=1}^n W_i(\eta_i) \right ) f(\bar \eta) \right ]\notag\\
&= \sum_{\bar \eta \in \{0,1\}^n} \mathbb E_{X_1^n} \left [ \prod_{i=1}^n W_i(\eta_i) \right ] f(\bar \eta) \notag\\
&= \sum_{\bar \eta \in \{0,1\}^n} \mathbb E_{X_1^{n-1}} \left [ \mathbb E_{X_n} \left [ \left . \prod_{i=1}^n W_i(\eta_i) \right | X_1^{n-1}\right ]\right ]f(\bar \eta)\notag\\
&= \sum_{\bar \eta \in \{0,1\}^n} \mathbb E_{X_1^{n-1}} \left [\prod_{i=1}^{n-1} W_i(\eta_i) \mathbb E_{X_n} \left [W_n(\eta_n)| X_1^{n-1}\right ] \right ]f(\bar \eta)\notag\\
&= \sum_{\bar \eta \in \{0,1\}^n} \mathbb E_{X_1^{n-1}} \left [ \prod_{i=1}^{n-1} W_i(\eta_i) \right ] w_n(\eta_n) f(\bar \eta)\notag\\
&= \dots \label{eq:induction}\\
&= \sum_{\bar \eta \in \{0,1\}^n} \left (\prod_{i=1}^n w_i(\eta_i) \right ) f(\bar \eta)\notag\\
&= \sum_{\bar \eta \in \{0,1\}^n} \left (\prod_{i=1}^n [(1 - b_i)(1 - \eta_i) + b_i \eta_i] \right ) f(\bar \eta)\notag\\
&= \mathbb E_{Y_1^n} [f(Y_1^n)].\notag
\end{align}
In \eqref{eq:induction} we apply induction in order to replace $X_i$ by $b_i$, one-by-one from the last to the first, same way we did it for $X_n$.
\end{proof}

Lemma \ref{lem:Ekl} follows from the following concentration result for independent Bernoulli variables that is based on the method of types in information theory \cite{CT91}. Its proof can be found in \cite{See03,ST10}.
\begin{lemma}
\label{lem:Laplace}
Let $Y_1,\dots,Y_n$ be i.i.d. Bernoulli random variables, such that $\mathbb E[Y_i] = b$. Then:
\begin{equation}
\mathbb E \left[e^{n\,\kl\left(\frac{1}{n} \sum_{i=1}^n Y_i\middle\|b\right)}\right] \leq n+1. 
\label{eq:Laplace}
\end{equation}
\end{lemma}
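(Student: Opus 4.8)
The plan is to compute the expectation directly by summing over the possible values of $S_n := \sum_{i=1}^n Y_i$ and exploiting a cancellation that is characteristic of the method of types. Since each $Y_i \in \{0,1\}$ and the $Y_i$ are i.i.d.\ Bernoulli with mean $b$, the sum $S_n$ is $\mathrm{Binomial}(n,b)$ and takes values $k \in \{0,1,\dots,n\}$ with $P(S_n = k) = \binom{n}{k} b^k (1-b)^{n-k}$. First I would write
\[
\mathbb E\left[e^{n\,\kl(\frac{1}{n} S_n \| b)}\right] = \sum_{k=0}^n \binom{n}{k} b^k (1-b)^{n-k}\, e^{n\,\kl(\frac{k}{n}\|b)}.
\]

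Next I would expand the exponent using the definition of $\kl$, namely $n\,\kl(\frac{k}{n}\|b) = k\ln\frac{k/n}{b} + (n-k)\ln\frac{(n-k)/n}{1-b}$, so that
\[
e^{n\,\kl(\frac{k}{n}\|b)} = \left(\frac{k/n}{b}\right)^k\left(\frac{(n-k)/n}{1-b}\right)^{n-k}.
\]
The key step is to observe that when this is multiplied by $\binom{n}{k} b^k (1-b)^{n-k}$, the factors $b^k$ and $(1-b)^{n-k}$ cancel exactly, leaving the $b$-independent quantity $\binom{n}{k}(k/n)^k((n-k)/n)^{n-k}$.

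I would then recognize this expression as $P_{k/n}(S_n = k)$, i.e.\ the probability that a $\mathrm{Binomial}(n, k/n)$ random variable equals $k$ --- precisely the value of the binomial mass function at $k$ evaluated at the maximum-likelihood parameter. Being a probability, each such term is at most $1$, so summing the $n+1$ terms gives
\[
\sum_{k=0}^n \binom{n}{k}\left(\frac{k}{n}\right)^k\left(\frac{n-k}{n}\right)^{n-k} \leq \sum_{k=0}^n 1 = n+1,
\]
which is the claimed bound.

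I expect the only delicate points to be the boundary cases $k \in \{0,n\}$ and $b \in \{0,1\}$, where the $\kl$ terms involve expressions such as $0\ln 0$; these are handled by the standard conventions $0\ln 0 = 0$ and $0^0 = 1$, under which both the cancellation and the probabilistic interpretation remain valid. The heart of the argument --- and what makes the final bound independent of $b$ --- is the cancellation in the second step, which is exactly the method-of-types phenomenon: the tilting weight $e^{n\,\kl}$ re-centers the binomial at its empirical mean $k/n$, turning each summand into a genuine probability that is trivially bounded by $1$.
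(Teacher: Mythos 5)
Your proof is correct, and it is exactly the method-of-types argument that the paper attributes to this lemma: the tilting factor $e^{n\,\kl(k/n\|b)}$ cancels the $b^k(1-b)^{n-k}$ weights, each of the $n+1$ resulting terms $\binom{n}{k}(k/n)^k((n-k)/n)^{n-k}$ is a probability and hence at most $1$, and the boundary conventions $0\ln 0 = 0$, $0^0=1$ are handled as you say. The paper itself does not spell out this proof (it defers to the cited references \cite{See03,ST10}), so your self-contained computation matches the intended argument rather than deviating from it.
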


For $n\geq8$ it is possible to prove even stronger result $\sqrt n \leq \mathbb E[e^{n \, \kl(\frac{1}{n} \sum_{i=1}^n Y_i\|b)}] \leq 2 \sqrt n$  using Stirling's approximation of the factorial \cite{Mau04}. For the sake of simplicity we restrict ourselves to the slightly weaker bound \eqref{eq:Laplace}, although all results that are based on Lemma \ref{lem:Ekl} can be slightly improved by using the tighter bound.

\begin{proof}[Proof of Lemma \ref{lem:Ekl}]
Since KL-divergence is a convex function \cite{CT91} and the exponent function is convex and non-decreasing, $e^{n \, \kl(p\|q)}$ is also a convex function. Therefore, Lemma \ref{lem:Ekl} follows from Lemma \ref{lem:Laplace} by Lemma \ref{lem:Martin}.
\end{proof}

Corollary \ref{cor:kl} follows from Lemma \ref{lem:Ekl} by Markov's inequality.
\begin{lemma}[Markov's inequality]
\label{lem:Markov}
For $\delta \in (0,1)$ and a random variable $X \geq 0$, with probability greater than $1-\delta$$:$
\begin{equation}
X \leq \frac{1}{\delta} \mathbb E[X].
\end{equation}
\end{lemma}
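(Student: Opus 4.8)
The plan is to reduce the stated high-probability form to the classical tail version of Markov's inequality and then perform a change of variable in the threshold. First I would establish the pointwise bound $a\,\mathbf 1[X \geq a] \leq X$, which holds for every $a > 0$ precisely because $X \geq 0$: on the event $\{X \geq a\}$ the left-hand side equals $a \leq X$, and on the complementary event it equals $0 \leq X$. Taking expectations of both sides and using monotonicity of the expectation yields $a\,\mathbb P(X \geq a) \leq \mathbb E[X]$, that is, $\mathbb P(X \geq a) \leq \mathbb E[X]/a$.

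Next I would specialize the threshold to $a = \frac{1}{\delta}\mathbb E[X]$, which is admissible whenever $\mathbb E[X] > 0$ (the case $\mathbb E[X] = 0$ being trivial, since then $X = 0$ almost surely and the claimed inequality holds with probability one). This substitution gives $\mathbb P\bigl(X \geq \frac{1}{\delta}\mathbb E[X]\bigr) \leq \delta$, and passing to the complementary event produces $\mathbb P\bigl(X < \frac{1}{\delta}\mathbb E[X]\bigr) \geq 1 - \delta$. Since the strict inequality $X < \frac{1}{\delta}\mathbb E[X]$ implies the weak inequality $X \leq \frac{1}{\delta}\mathbb E[X]$ asserted in the lemma, the event on which the desired bound holds has probability at least $1-\delta$, as required.

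There is essentially no analytical obstacle here; the only point requiring a little care is the bookkeeping of the inequality directions and of strict versus non-strict comparisons when converting the tail bound into the high-probability form, together with the handling of the degenerate case $\mathbb E[X] = 0$. I would also note that the integral derivation $\mathbb E[X] = \int_0^\infty x\,dF(x) \geq a\int_a^\infty dF(x) = a\,\mathbb P(X \geq a)$ gives the same tail bound by an equally short route, so either argument may be used interchangeably to seed the final threshold substitution.
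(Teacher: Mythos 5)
Your proof is correct and is the standard textbook argument for Markov's inequality; note that the paper itself states this lemma as classical background and gives no proof of it at all (it is invoked only to pass from Lemma \ref{lem:Ekl} to Corollary \ref{cor:kl}), so there is no in-paper argument to compare against. The one place where your write-up does not literally match the statement is the probability bound: your chain $\mathbb P\bigl(X \le \tfrac{1}{\delta}\mathbb E[X]\bigr) \ge \mathbb P\bigl(X < \tfrac{1}{\delta}\mathbb E[X]\bigr) \ge 1-\delta$ delivers ``at least $1-\delta$,'' whereas the lemma as phrased asserts ``greater than $1-\delta$.'' If one insists on the strict form, it follows from a small sharpening of your own intermediate step: with $a = \tfrac{1}{\delta}\mathbb E[X]$ and $0 < \mathbb E[X] < \infty$, if $\mathbb P(X > a) \ge \delta$ then $\mathbb E[X] \ge \mathbb E\bigl[X\,\mathbf 1[X > a]\bigr] > a\,\mathbb P(X > a) \ge a\delta = \mathbb E[X]$, where the middle inequality is strict because $X - a > 0$ on an event of positive probability; this contradiction forces $\mathbb P(X > a) < \delta$, hence $\mathbb P(X \le a) > 1-\delta$. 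Since the paper, like most of the PAC-Bayesian literature, uses ``greater than'' and ``at least'' interchangeably in such statements, this is a cosmetic refinement rather than a gap in your argument.
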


\begin{proof}[Proof of Corollary \ref{cor:kl}]
By Markov's inequality and Lemma \ref{lem:Ekl}, with probability greater than $1-\delta$:
\[
e^{n\, \kl\left(\frac{1}{n} S_n\middle\|b\right)} \leq \frac{1}{\delta} \mathbb E \left[e^{n\, \kl\left(\frac{1}{n} S_n\middle\|b\right)}\right] \leq \frac{n+1}{\delta}.
\]
Taking logarithm of both sides of the inequality and normalizing by $n$ completes the proof.
\end{proof}

\section{Proofs of PAC-Bayesian Theorems for Martingales}

In this appendix we provide the proofs of Theorems \ref{thm:PAC-Bayes-kl}, \ref{thm:PB-B}, and \ref{thm:PB-B+}. The proof of Theorem \ref{thm:PB-HA} is very similar to the proof of Theorem \ref{thm:PB-B} and, therefore, omitted. The proof of Theorem \ref{thm:PB-HA+} is very similar to the proof of Theorem \ref{thm:PB-B+}, so we only provide the way of how to choose the grid of $\lambda$-s in this theorem.

The proofs of all PAC-Bayesian theorems are based on the following lemma, which is obtained by changing sides in Donsker-Varadhan's variational definition of relative entropy. The lemma takes roots back in information theory and statistical physics \cite{DV75, DE97, Gra11}. The lemma provides a deterministic relation between averages of $\phi$ with respect to all possible distributions $\rho$ and the cumulant generating function $\ln \langle e^\phi, \pi \rangle$ with respect to a single reference distribution $\pi$. A single application of Markov's inequality combined with the bounds on moment generating functions in Lemmas \ref{lem:Ekl}, \ref{lem:HA}, and \ref{lem:Bernstein} is then used in order to bound the last term in \eqref{eq:PAC-Bayes} in the proofs of Theorems \ref{thm:PAC-Bayes-kl}, \ref{thm:PB-HA}, and \ref{thm:PB-B}, respectively.

\begin{lemma}[Change of Measure Inequality]
\label{lem:PAC-Bayes}
For any probability space $({\cal H}, {\cal B})$, a measurable function $\phi:{\cal H} \rightarrow \mathbb R$, and any distributions $\pi$ and $\rho$ over ${\cal H}$, we have$:$
\begin{equation}
\langle \phi, \rho \rangle \leq \KL(\rho\|\pi) + \ln \langle e^\phi, \pi \rangle.
\label{eq:PAC-Bayes}
\end{equation}
\end{lemma}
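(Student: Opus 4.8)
The plan is to prove this by rearranging the desired inequality into the statement that a single KL-divergence is nonnegative, or equivalently by a one-line application of Jensen's inequality. First I would dispose of the degenerate cases: if $\KL(\rho\|\pi) = +\infty$ the right-hand side is infinite and there is nothing to prove, so I may assume $\rho$ is absolutely continuous with respect to $\pi$ and write $\frac{d\rho}{d\pi}$ for the Radon-Nikodym derivative (for countable or density settings this is just $\rho(h)/\pi(h)$). Likewise, if $\langle e^\phi, \pi\rangle = +\infty$ the bound is trivial, so I may assume it is finite.

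The core step is the identity
\[
\langle \phi, \rho \rangle - \KL(\rho\|\pi) = \mathbb E_\rho\!\left[\phi - \ln \frac{d\rho}{d\pi}\right] = \mathbb E_\rho\!\left[\ln\!\left(e^{\phi}\,\frac{d\pi}{d\rho}\right)\right],
\]
obtained just by unfolding the definition of $\KL(\rho\|\pi) = \mathbb E_\rho[\ln(d\rho/d\pi)]$ and combining the logarithms. Since $\ln$ is concave, Jensen's inequality gives
\[
\mathbb E_\rho\!\left[\ln\!\left(e^{\phi}\,\frac{d\pi}{d\rho}\right)\right] \leq \ln \mathbb E_\rho\!\left[e^{\phi}\,\frac{d\pi}{d\rho}\right] = \ln \int_{\cal H} e^{\phi}\, d\pi = \ln \langle e^{\phi}, \pi\rangle,
\]
where the cancellation $\frac{d\pi}{d\rho}\,d\rho = d\pi$ is exactly what turns the $\rho$-average into a $\pi$-average. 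Rearranging the resulting inequality $\langle \phi, \rho\rangle - \KL(\rho\|\pi) \leq \ln\langle e^\phi,\pi\rangle$ yields \eqref{eq:PAC-Bayes}. An equivalent and slightly more illuminating route is to introduce the tilted (Gibbs) measure $\pi_\phi$ with density proportional to $e^{\phi}\,d\pi$ and observe that a direct computation gives $\KL(\rho\|\pi_\phi) = \KL(\rho\|\pi) - \langle\phi,\rho\rangle + \ln\langle e^\phi,\pi\rangle$; nonnegativity of $\KL(\rho\|\pi_\phi)$ then gives the claim, and also pins down the equality case $\rho = \pi_\phi$, matching the supremum statement quoted in the footnote.

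I do not expect a serious obstacle here, since this is the classical Donsker-Varadhan estimate; the only real care needed is bookkeeping in the degenerate cases. The subtle point is to ensure the rearrangement $\langle\phi,\rho\rangle - \KL(\rho\|\pi) \leq \ln\langle e^\phi,\pi\rangle \Rightarrow \langle\phi,\rho\rangle \leq \KL(\rho\|\pi) + \ln\langle e^\phi,\pi\rangle$ is legitimate, i.e. that one is not subtracting $+\infty$ from $+\infty$. Having already reduced to $\rho \ll \pi$ with $\langle e^\phi,\pi\rangle < \infty$, the Jensen step bounds $\langle\phi,\rho\rangle - \KL(\rho\|\pi)$ above by a finite quantity, so no such ambiguity arises and the inequality follows in full generality.
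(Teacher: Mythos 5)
Your proof is correct, but note that the paper itself offers no proof of this lemma: it is stated as a known fact, ``obtained by changing sides in Donsker-Varadhan's variational definition of relative entropy,'' with citations to the information-theory and statistical-physics literature. What you have written is therefore a self-contained derivation of the cited variational inequality itself, and both of your routes are sound. One small precision is needed in the Jensen route: having assumed only $\rho \ll \pi$, the symbol $\frac{d\pi}{d\rho}$ need not exist as a Radon--Nikodym derivative; it must be read as $1\big/\frac{d\rho}{d\pi}$, which is well defined $\rho$-almost everywhere since $\frac{d\rho}{d\pi}>0$ holds $\rho$-a.e. With that reading,
\[
\mathbb E_\rho\left[e^{\phi}\Big/\tfrac{d\rho}{d\pi}\right] = \int_{\{d\rho/d\pi>0\}} e^{\phi}\,d\pi \;\leq\; \langle e^{\phi},\pi\rangle,
\]
so your claimed equality is in general only an inequality --- but it points in the right direction, and the argument survives unchanged. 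Your second route, nonnegativity of $\KL(\rho\|\pi_\phi)$ for the Gibbs measure $d\pi_\phi = e^{\phi}\,d\pi / \langle e^{\phi},\pi\rangle$, avoids even this subtlety (since $e^{\phi}>0$, the measures $\pi_\phi$ and $\pi$ are mutually absolutely continuous, so $\rho\ll\pi$ gives $\rho\ll\pi_\phi$), and it buys something the Jensen route does not: it identifies the equality case $\rho=\pi_\phi$, equivalently $\phi = \ln\frac{d\rho}{d\pi} + \mathrm{const}$, which is exactly the tightness remark the paper makes after the lemma and in its footnote on Donsker-Varadhan's formula. In short, the paper buys brevity by citation; your argument buys self-containedness, explicit handling of the degenerate cases $\KL(\rho\|\pi)=\infty$ and $\langle e^{\phi},\pi\rangle=\infty$, and the equality characterization.
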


Since the KL-divergence is infinite when the support of $\rho$ exceeds the support of $\pi$, inequality \eqref{eq:PAC-Bayes} is interesting when $\pi \gg \rho$. For a similar reason, it is interesting only when $\langle e^\phi, \pi \rangle$ is finite. We note that the inequality is tight in the same sense as Jensen's inequality is tight: for $\phi(h) = \ln \frac{\rho(h)}{\pi(h)}$ it becomes an equality.

\begin{proof}[Proof of Theorem \ref{thm:PAC-Bayes-kl}]
Take $\phi(h) := n \, \kl\left(\frac{1}{n} \bar S_n(h)\middle\| \bar b(h)\right)$. More compactly, denote $\phi = \kl \left(\frac{1}{n} \bar S_n\middle\| \bar b \right ): {\cal H} \rightarrow \mathbb R$. Then with probability greater than $1-\delta$ for all $\rho$:
\begin{align}
\nonumber
n \,\kl&\left(\left \langle \frac{1}{n} \bar S_n, \rho \right \rangle \,\middle\|\, \langle \bar b, \rho \rangle \right)\\
&\leq n \left \langle \kl\left(\frac{1}{n} \bar S_n\,\middle\|\, \bar b\right), \rho \right \rangle \label{eq:2}\\
&\leq \KL(\rho\|\pi) + \ln \left \langle e^{n \, \kl(\frac{1}{n} \bar S_n\| \bar b)}, \pi \right \rangle\label{eq:3}\\
&\leq \KL(\rho\|\pi) + \ln \left (\frac{1}{\delta} \mathbb E_{\bar X_1^n} \left [ \left \langle e^{n \, \kl(\frac{1}{n} \bar S_n\| \bar b)}, \pi \right \rangle \right] \right )\label{eq:4}\\
&= \KL(\rho\|\pi) + \ln \left (\frac{1}{\delta} \left \langle \mathbb E_{\bar X_1^n} \left[e^{n \, \kl(\frac{1}{n} \bar S_n\|\bar b)}\right], \pi \right \rangle \right )\label{eq:5}\\
&\leq \KL(\rho\|\pi) + \ln \frac{n+1}{\delta},\label{eq:6}
\end{align}
where \eqref{eq:2} is by convexity of the $\kl$ divergence \cite{CT91}; \eqref{eq:3} is by change of measure inequality (Lemma \ref{lem:PAC-Bayes}); \eqref{eq:4} holds with probability greater than $1-\delta$ by Markov's inequality; in \eqref{eq:5} we can take the expectation inside the dot product due to linearity of both operations and since $\pi$ is deterministic; and \eqref{eq:6} is by Lemma \ref{lem:Ekl}.\footnote{By Lemma \ref{lem:Ekl}, for each $h \in {\cal H}$ we have $\mathbb E_{\bar X_1^n} \left[e^{n \, \kl(\frac{1}{n} \bar S_n(h)\|\bar b(h))}\right] \leq n+1$ and, therefore, $\left \langle \mathbb E_{\bar X_1^n} \left[e^{n \, \kl(\frac{1}{n} \bar S_n\|\bar b)}\right], \pi \right \rangle \leq n+1$.} Normalization by $n$ completes the proof of the theorem.
\end{proof}

\begin{proof}[Proof of Theorem \ref{thm:PB-B}]
For the proof of Theorem \ref{thm:PB-B} we take $\phi(h) := \lambda \bar M_n(h) - (e-2) \lambda^2 \bar V_n(h)$. Or, more compactly, $\phi = \lambda \bar M_n - (e-2) \lambda^2 \bar V_n$. Then with probability greater than $1 - \frac{\delta}{2}$ for all $\rho$:
\begin{align}
\lambda \langle \bar M_n,& \rho \rangle - (e-2) \lambda^2 \langle \bar V_n, \rho \rangle =\langle \lambda \bar M_n - (e-2) \lambda^2 \bar V_n, \rho \rangle \notag\\
&\leq \KL(\rho\|\pi) + \ln \left \langle e^{\lambda \bar M_n - (e-2) \lambda^2 \bar V_n}, \pi \right \rangle\notag\\
&\leq \KL(\rho\|\pi) + \ln \left (\frac{2}{\delta} \mathbb E_{\bar Z_1^n} \left [\left \langle e^{\lambda \bar M_n - (e-2) \lambda^2 \bar V_n}, \pi \right \rangle \right] \right )\label{eq:23}\\
&= \KL(\rho\|\pi) + \ln \left (\frac{2}{\delta} \left \langle \mathbb E_{\bar Z_1^n} \left [ e^{\lambda \bar M_n - (e-2) \lambda^2 \bar V_n} \right], \pi \right \rangle \right )\notag\\
&\leq \KL(\rho\|\pi) + \ln \frac{2}{\delta},\label{eq:26}
\end{align}
where \eqref{eq:26} is by Lemma \ref{lem:Bernstein} and other steps are justified in the same way as in the previous proof.

By applying the same argument to $-\bar M_n$, taking a union bound over the two results, taking $(e-2) \lambda^2 \langle \bar V_n, \rho \rangle$ to the other side of the inequality, and normalizing by $\lambda$, we obtain the statement of the theorem.
\end{proof}

\begin{proof}[Proof of Theorem \ref{thm:PB-B+}]
The value of $\lambda$ that minimizes \eqref{eq:PB-B} depends on $\rho$, whereas we would like to have a result that holds for all possible distributions $\rho$ simultaneously. This requires considering multiple values of $\lambda$ simultaneously and we have to take a union bound over $\lambda$-s in step \eqref{eq:23} of the proof of Theorem \ref{thm:PB-B}. We cannot take all possible values of $\lambda$, since there are uncountably many possibilities. Instead we determine the relevant range of $\lambda$ and take a union bound over a grid of $\lambda$-s that forms a geometric sequence over this range. Since the range is finite, the grid is also finite.

The upper bound on the relevant range of $\lambda$ is determined by the constraint that $\lambda \leq \frac{1}{K}$. For the lower bound we note that since $\KL(\rho\|\pi) \geq 0$, the value of $\lambda$ that minimizes \eqref{eq:PB-B} is lower bounded by $\sqrt{\frac{\ln \frac{2}{\delta}}{(e-2) \langle \bar V_n, \rho\rangle}}$. We also note that $\langle \bar V_n, \rho \rangle \leq K^2 n$, since $|Z_i(h)| \leq K$ for all $h$ and $i$. Hence, $\lambda \geq \frac{1}{K} \sqrt{\frac{\ln \frac{2}{\delta}}{(e-2)n}}$ and the range of $\lambda$ we are interested in is
\[
\lambda \in \left[\frac{1}{K} \sqrt{\frac{\ln \frac{2}{\delta}}{(e-2)n}}, \frac{1}{K}\right].
\]
We cover the above range with a grid of $\lambda_i$-s, such that $\lambda_i := c^i \frac{1}{K} \sqrt{\frac{\ln \frac{2}{\delta}}{(e-2)n}}$ for $i = 0,\dots,m-1$. It is easy to see that in order to cover the interval of relevant $\lambda$ we need
\[
m = \left \lceil \frac{1}{\ln c}\ln \left ( \sqrt{\frac{(e-2)n}{\ln \frac{2}{\delta}}} \right ) \right \rceil.
\]
($\lambda_{m-1}$ is the last value that is strictly less than $1/K$ and we take $\lambda_m := 1/K$ for the case when the technical condition \eqref{eq:technical} is not satisfied). This defines the value of $\nu$ in \eqref{eq:m}.

Finally, we note that \eqref{eq:PB-B} has the form $g(\lambda) = \frac{U}{\lambda} + \lambda V$. For the relevant range of $\lambda$, there is $\lambda_{i^*}$ that satisfies $\sqrt{U/V} \leq \lambda_{i^*} < c \sqrt{U/V}$. For this value of $\lambda$ we have $g(\lambda_{i^*}) \leq (1+c) \sqrt{UV}$. 

Therefore, whenever \eqref{eq:technical} is satisfied we pick the highest value of $\lambda_i$ that does not exceed the left hand side of \eqref{eq:technical}, substitute it into \eqref{eq:PB-B}, and obtain \eqref{eq:PB-B+}, where the $\ln \nu$ factor comes from the union bound over $\lambda_i$-s. If \eqref{eq:technical} is not satisfied, we know that $\langle \bar V_n, \rho \rangle < K^2 \left (KL(\rho\|\pi) + \ln \frac{2\nu}{\delta}\right) / (e-2)$ and by taking $\lambda = 1 / K$ and substituting into \eqref{eq:PB-B} we obtain \eqref{eq:else}.
\end{proof}

\begin{proof}[Proof of Theorem \ref{thm:PB-HA+}]
Theorem \ref{thm:PB-HA+} follows from Theorem \ref{thm:PB-HA} in the same way as Theorem \ref{thm:PB-B+} follows from Theorem \ref{thm:PB-B}. The only difference is that the relevant range of $\lambda$ is unlimited from above. If $\KL(\rho\|\pi) = 0$ the bound is minimized by
\[
    \lambda = \sqrt{\frac{8 \ln \frac{2}{\delta}}{\sum_{i=1}^n (\beta_i - \alpha_i)^2}},
\]
hence, we are interested in $\lambda$ that is larger or equal to this value. We take a grid of $\lambda_i$-s of the form
\[
\lambda_i := c^i\sqrt{\frac{8 \ln \frac{2}{\delta}}{\sum_{i=1}^n (\beta_i - \alpha_i)^2}}
\]
for $i \geq 0$. Then for a given value of $\KL(\rho\|\pi)$ we have to pick $\lambda_i$, such that
\[
i = \left\lfloor \frac{\ln \left (\frac{\KL(\rho\|\pi)}{\ln \frac{2}{\delta}} + 1 \right )}{2 \ln c} \right\rfloor,
\]
where $\lfloor x \rfloor$ is the largest integer value that is smaller than $x$. Taking a weighted union bound over $\lambda_i$-s with weights $2^{-(i+1)}$ completes the proof. (In the weighted union bound we take $\delta_i = \delta 2^{-(i+1)}$. Then by substitution of $\delta$ with $\delta_i$, \eqref{eq:PB-HA} holds with probability greater than $1-\delta_i$ for each $\lambda_i$ individually, and with probability greater than $1 - \sum_{i=0}^\infty \delta_i = 1 - \delta$ for all $\lambda_i$ simultaneously.)
\end{proof}

\section{Background}
\label{app:back}

In this section we provide a proof of Lemma \ref{lem:Bernstein}. The proof reproduces an intermediate step in the proof of \cite[Theorem 1]{BLL+11}.
\begin{proof}[Proof of Lemma \ref{lem:Bernstein}]
First, we have:
\begin{align}
\mathbb E_{Z_i} \left [e^{\lambda Z_i} \middle | Z_1^{i-1} \right] &\leq \mathbb E_{Z_i} \left [1 + \lambda Z_i + (e-2) \lambda^2 (Z_i)^2 \middle | Z_1^{i-1} \right]\label{eq:31}\\
&= 1 + (e-2) \lambda^2 \mathbb E_{Z_i} \left [ (Z_i)^2 \middle | Z_1^{i-1}\right ]\label{eq:32}\\
&\leq e^{(e-2) \lambda^2 \mathbb E_{Z_i} \left [ (Z_i)^2 \middle | Z_1^{i-1}\right ]},\label{eq:33}
\end{align}
where \eqref{eq:31} uses the fact that $e^x \leq 1 + x + (e-2) x^2$ for $x \leq 1$ (this restricts the choice of $\lambda$ to $\lambda \leq \frac{1}{K}$, which leads to technical conditions \eqref{eq:technical} and \eqref{eq:technical1} in Theorem \ref{thm:PB-B+} and Corollary \ref{cor:Bernstein}, respectively); \eqref{eq:32} uses the martingale property $\mathbb E_{Z_i}[Z_i | Z_1^{i-1}] = 0$; and \eqref{eq:33} uses the fact that $1 + x \leq e^x$ for all $x$.

We apply inequality \eqref{eq:33} in the following way:
\begin{align}
&\mathbb E_{Z_1^n}\left[e^{\lambda M_n - (e-2) \lambda^2 V_n}\right]\notag\\
&= \mathbb E_{Z_1^n}\left[e^{\lambda M_{n-1} - (e-2) \lambda^2 V_{n-1} + \lambda Z_n - (e-2) \lambda^2 \mathbb E \left[(Z_n)^2\middle|Z_1^{n-1}\right]} \right] \notag\\
&= \mathbb E_{Z_1^{n-1}}\left[
\begin{array}{l}
e^{\lambda M_{n-1} - (e-2) \lambda^2 V_{n-1}}\notag\\
\, \times \, \mathbb E_{Z_n} \left [e^{\lambda Z_n}\middle | Z_1^{n-1} \right] \times e^{-(e-2) \lambda^2 \mathbb E \left[(Z_n)^2\middle|Z_1^{n-1}\right]}
\end{array}
\right]\label{eq:34}\\
&\leq \mathbb E_{Z_1^{n-1}}\left[e^{\lambda M_{n-1} - (e-2) \lambda^2 V_{n-1}} \right ]\\
&\leq \dots \label{eq:35}\\
&\leq 1.\notag
\end{align}
Inequality \eqref{eq:34} applies inequality \eqref{eq:33} and inequality \eqref{eq:35} recursively proceeds with $Z_{n-1},\dots,Z_1$ (in reverse order).
\end{proof}

Note that conditioning on additional variables in the proof of the lemma does not change the result. This fact is exploited in the proof of Theorem \ref{thm:PB-B}, when we allow interdependence between multiple martingales.

\section*{Acknowledgments}

The authors would like to thank Andreas Maurer for his comments on Lemma \ref{lem:Martin}. We are also very grateful to the anonymous reviewers for their valuable comments that helped to improve the presentation of our work. This work was supported in part by the IST Programme of the European Community, under the PASCAL2 Network of Excellence, IST-2007-216886, and by the European Community's Seventh Framework Programme (FP7/2007-2013), under grant agreement $N^o$270327. This publication only reflects the authors' views.

\ifCLASSOPTIONcaptionsoff
  \newpage
\fi



\bibliographystyle{IEEEtran}
\bibliography{IEEEabrv,bibliography}

\begin{thebibliography}{10}
\providecommand{\url}[1]{#1}
\csname url@samestyle\endcsname
\providecommand{\newblock}{\relax}
\providecommand{\bibinfo}[2]{#2}
\providecommand{\BIBentrySTDinterwordspacing}{\spaceskip=0pt\relax}
\providecommand{\BIBentryALTinterwordstretchfactor}{4}
\providecommand{\BIBentryALTinterwordspacing}{\spaceskip=\fontdimen2\font plus
\BIBentryALTinterwordstretchfactor\fontdimen3\font minus
  \fontdimen4\font\relax}
\providecommand{\BIBforeignlanguage}[2]{{%
\expandafter\ifx\csname l@#1\endcsname\relax
\typeout{** WARNING: IEEEtran.bst: No hyphenation pattern has been}%
\typeout{** loaded for the language `#1'. Using the pattern for}%
\typeout{** the default language instead.}%
\else
\language=\csname l@#1\endcsname
\fi
#2}}
\providecommand{\BIBdecl}{\relax}
\BIBdecl

\bibitem{Hoe63}
W.~Hoeffding, ``Probability inequalities for sums of bounded random
  variables,'' \emph{Journal of the American Statistical Association}, vol.~58,
  no. 301, pp. 13--30, 1963.

\bibitem{Azu67}
K.~Azuma, ``Weighted sums of certain dependent random variables,''
  \emph{T\^{o}hoku Mathematical Journal}, vol.~19, no.~3, 1967.

\bibitem{Ber46}
S.~N. Bernstein, \emph{Probability Theory}, 4th~ed., Moscow-Leningrad, 1946, in
  Russian.

\bibitem{SAL+11}
Y.~Seldin, P.~Auer, F.~Laviolette, J.~Shawe-Taylor, and R.~Ortner,
  ``{PAC-Bayesian} analysis of contextual bandits,'' in \emph{Advances in
  Neural Information Processing Systems (NIPS)}, 2011.

\bibitem{DV75}
M.~D. Donsker and S.~S. Varadhan, ``Asymptotic evaluation of certain {Markov}
  process expectations for large time.'' \emph{Communications on Pure and
  Applied Mathematics}, vol.~28, 1975.

\bibitem{DE97}
P.~Dupuis and R.~S. Ellis, \emph{A Weak Convergence Approach to the Theory of
  Large Deviations}.\hskip 1em plus 0.5em minus 0.4em\relax Wiley-Interscience,
  1997.

\bibitem{Gra11}
R.~M. Gray, \emph{Entropy and Information Theory}, 2nd~ed.\hskip 1em plus 0.5em
  minus 0.4em\relax Springer, 2011.

\bibitem{CT91}
T.~M. Cover and J.~A. Thomas, \emph{Elements of Information Theory}.\hskip 1em
  plus 0.5em minus 0.4em\relax John Wiley {\&} Sons, 1991.

\bibitem{STW97}
J.~Shawe-Taylor and R.~C. Williamson, ``A {PAC} analysis of a {Bayesian}
  estimator,'' in \emph{Proceedings of the International Conference on
  Computational Learning Theory (COLT)}, 1997.

\bibitem{ST+98}
J.~Shawe-Taylor, P.~L. Bartlett, R.~C. Williamson, and M.~Anthony, ``Structural
  risk minimization over data-dependent hierarchies,'' \emph{IEEE Transactions
  on Information Theory}, vol.~44, no.~5, 1998.

\bibitem{McA98}
D.~McAllester, ``Some {PAC-B}ayesian theorems,'' in \emph{Proceedings of the
  International Conference on Computational Learning Theory (COLT)}, 1998.

\bibitem{See02}
M.~Seeger, ``{PAC-Bayesian} generalization error bounds for {Gaussian} process
  classification,'' \emph{Journal of Machine Learning Research}, 2002.

\bibitem{Val84}
L.~G. Valiant, ``A theory of the learnable,'' \emph{Communications of the
  Association for Computing Machinery}, vol.~27, no.~11, 1984.

\bibitem{LST02}
J.~Langford and J.~Shawe-Taylor, ``{PAC-Bayes} \& margins,'' in \emph{Advances
  in Neural Information Processing Systems (NIPS)}, 2002.

\bibitem{McA03}
D.~McAllester, ``{PAC-Bayesian} stochastic model selection,'' \emph{Machine
  Learning}, vol.~51, no.~1, 2003.

\bibitem{GLLM09}
P.~Germain, A.~Lacasse, F.~Laviolette, and M.~Marchand, ``{PAC-B}ayesian
  learning of linear classifiers,'' in \emph{Proceedings of the International
  Conference on Machine Learning (ICML)}, 2009.

\bibitem{ST10}
Y.~Seldin and N.~Tishby, ``{PAC-Bayesian} analysis of co-clustering and
  beyond,'' \emph{Journal of Machine Learning Research}, vol.~11, 2010.

\bibitem{LLST10}
G.~Lever, F.~Laviolette, and J.~Shawe-Taylor, ``Distribution-dependent
  {PAC-Bayes} priors,'' in \emph{Proceedings of the International Conference on
  Algorithmic Learning Theory (ALT)}, 2010.

\bibitem{Mau04}
A.~Maurer, ``A note on the {PAC}-{B}ayesian theorem,'' www.arxiv.org, 2004.

\bibitem{Fre75}
D.~A. Freedman, ``On tail probabilities for martingales,'' \emph{The Annals of
  Probability}, vol.~3, no.~1, 1975.

\bibitem{BLL+11}
A.~Beygelzimer, J.~Langford, L.~Li, L.~Reyzin, and R.~Schapire, ``Contextual
  bandit algorithms with supervised learning guarantees,'' in \emph{Proceedings
  on the International Conference on Artificial Intelligence and Statistics
  (AISTATS)}, 2011.

\bibitem{MSA08}
V.~Mnih, C.~Szepesv{\' a}ri, and J.-Y. Audibert, ``Empirical {Bernstein}
  stopping,'' in \emph{Proceedings of the International Conference on Machine
  Learning (ICML)}, 2008.

\bibitem{AMS09}
J.~Y. Audibert, R.~Munos, and C.~Szepesv{\' a}ri, ``Exploration-exploitation
  trade-off using variance estimates in multi-armed bandits,''
  \emph{Theoretical Computer Science}, 2009.

\bibitem{MP09}
A.~Maurer and M.~Pontil, ``Empirical {Bernstein} bounds and sample variance
  penalization,'' in \emph{Proceedings of the International Conference on
  Computational Learning Theory (COLT)}, 2009.

\bibitem{See03}
M.~Seeger, ``Bayesian {Gaussian} process models: {PAC-Bayesian} generalization
  error bounds and sparse approximations,'' Ph.D. dissertation, University of
  Edinburgh, 2003.

\end{thebibliography}

%

\begin{biography}[{\includegraphics[width=1in,height=1.25in,clip,keepaspectratio]{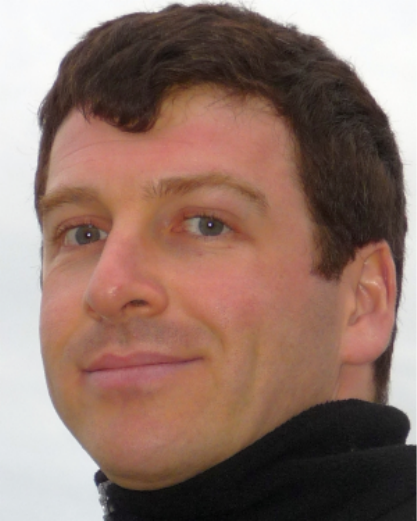}}]{Yevgeny Seldin}
received his Ph.D.\ in computer science from the Hebrew University of Jerusalem in 2010. Since 2009 he is a Research Scientist at the Max Planck Institute for Intelligent Systems in T{\" u}bingen and since 2011 he is also an Honorary Research Associate at the Department of Computer Science in University College London. His research interests include statistical learning theory, PAC-Bayesian analysis, and reinforcement learning. He has contributions in PAC-Bayesian analysis, reinforcement learning, clustering-based models in supervised and unsupervised learning, collaborative filtering, image processing, and bioinformatics.
\end{biography}

\begin{biography}[{\includegraphics[width=1in,height=1.25in,clip,keepaspectratio]{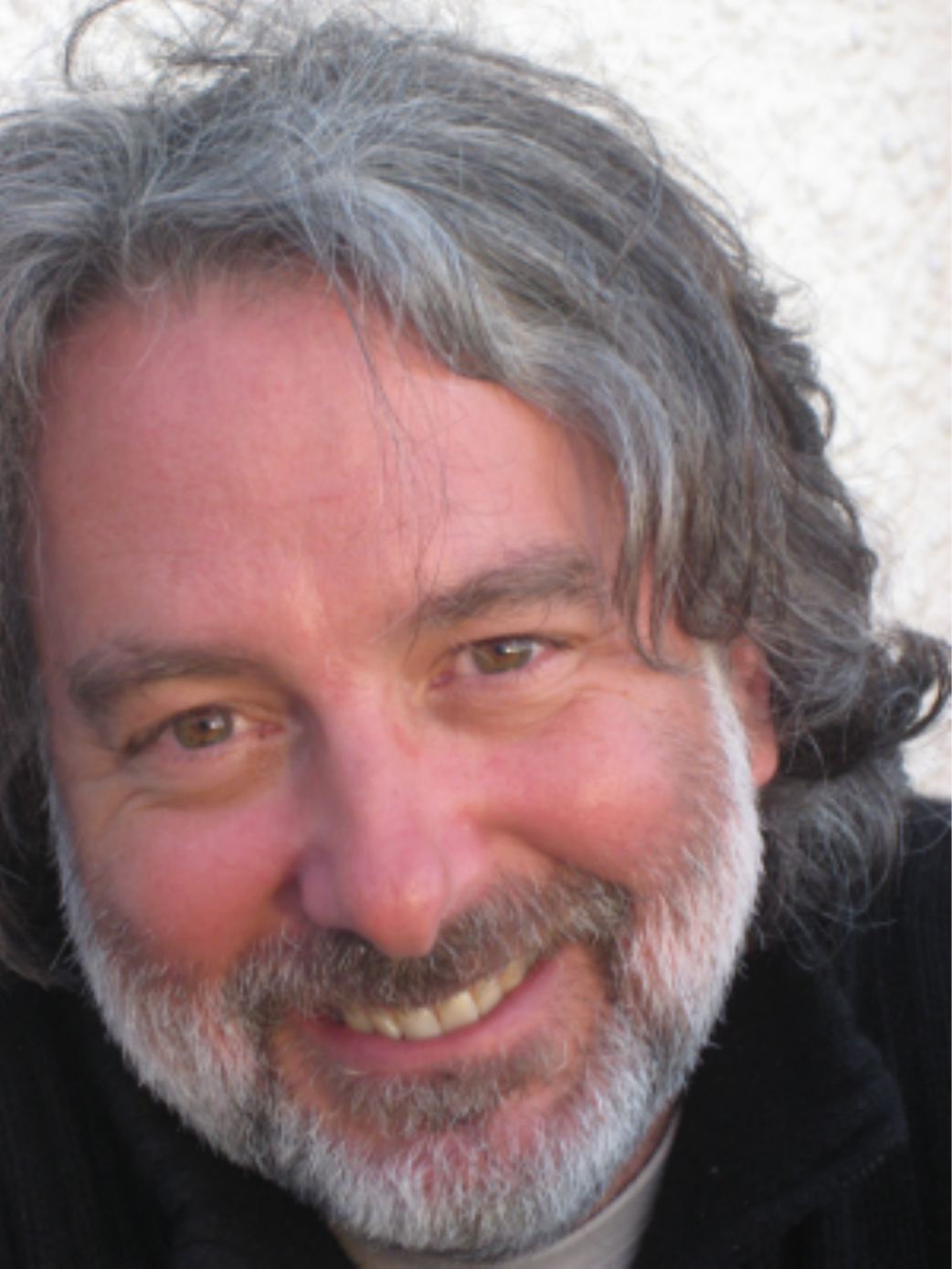}}]{Fran\c{c}ois Laviolette}
received his Ph.D.\ in mathematics from Universit\'e de Montr\'eal in 1997. His thesis solved a long-standing conjecture (60 years old) on graph theory and was among the seven finalists of the 1998 Council of Graduate Schools / University Microfilms International Distinguished Dissertation Award of Washington, in the category Mathematics-Physic-Engineering. He then moved to Universit\'e Laval, where he works on Probabilistic Verification of Systems, Bio-Informatics, and Machine Learning, with a particular interest in PAC-Bayesian analysis, for which he has already more than a dozen of scientific publications. 
\end{biography}

\begin{biography}[{\includegraphics[width=1in,height=1.25in,clip,keepaspectratio]{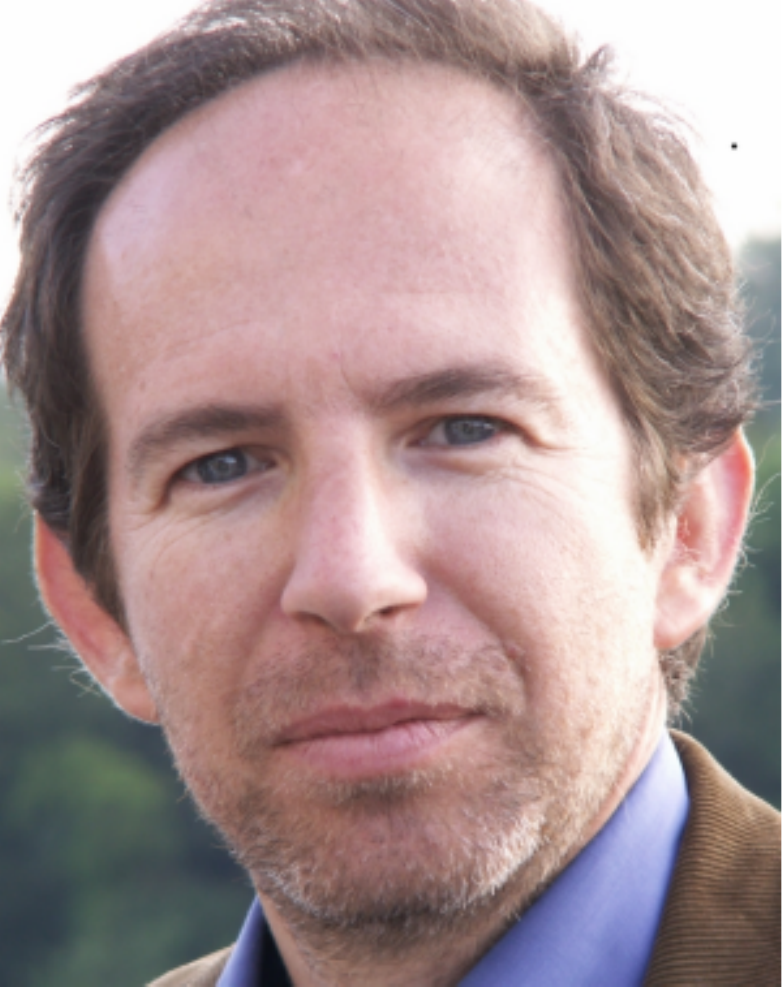}}]{Nicol{\`o} Cesa-Bianchi}
is a faculty member of the Computer Science Department at the Universit\`a degli Studi di Milano, Italy. His main research interests include statistical learning theory, game-theoretic learning, and pattern analysis. He is co-author with G\`abor Lugosi of the monography ``Prediction, Learning, and Games'' (Cambridge University Press, 2006).
\end{biography}

\begin{biography}[{\includegraphics[width=1in,height=1.25in,clip,keepaspectratio]{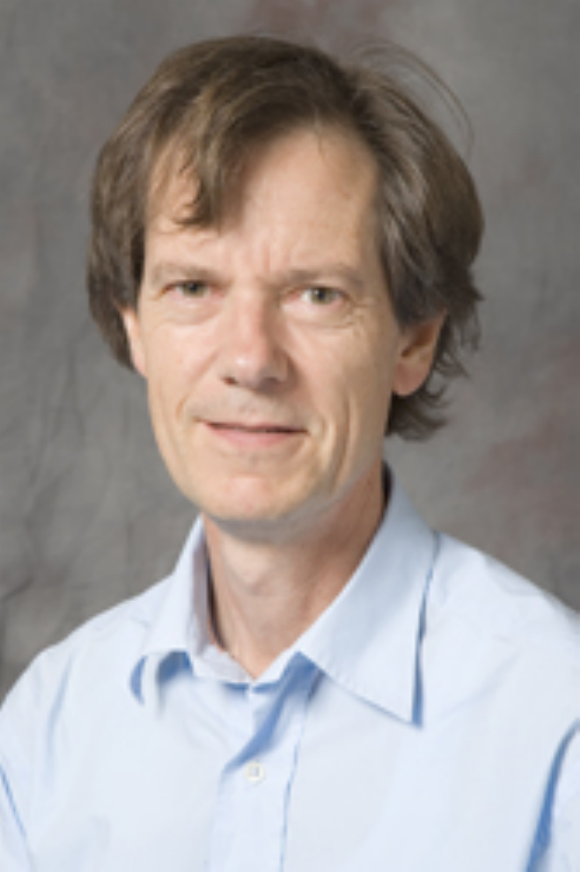}}]{John Shawe-Taylor}
obtained a Ph.D.\ in Mathematics at Royal Holloway, University of London in 1986. He subsequently completed an M.Sc.\ in the Foundations of Advanced Information Technology at Imperial College. He was promoted to Professor of Computing Science in 1996. He has published over 200 research papers. In 2006 he was appointed Director of the Center for Computational Statistics and Machine Learning at University College London. He has pioneered the development of the well-founded approaches to Machine Learning inspired by statistical learning theory (including Support Vector Machine, Boosting and Kernel Principal Components Analysis) and has shown the viability of applying these techniques to document analysis and computer vision. 
He is co-author of an Introduction to Support Vector Machines, the first comprehensive account of this new generation of machine learning algorithms. A second book on Kernel Methods for Pattern Analysis was published in 2004.
\end{biography}

\begin{biography}[{\includegraphics[width=1in,height=1.25in,clip,keepaspectratio]{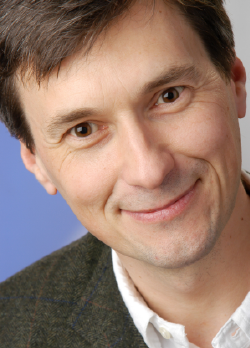}}]{Peter Auer}
received his Ph.D.\ in mathematics from the Vienna University of Technology in 1992, working on probability theory with Pal Revesz and on Symbolic Computation with Alexander Leitsch. He then moved to Graz University of Technology, working on Machine Learning with Wolfgang Maass, and was appointed associate professor in 1997. He has also been a research scholar at the University of California, Santa Cruz. In 2003 he accepted the position of a full professor for Information Technology at the Montanuniversit{\"a}t Leoben. He has authored scientific publications in the areas of probability theory, symbolic computation, and machine learning, he is a member of the editorial board of Machine Learning, and he has been principal investigator in several research projects funded by the European Union. His current research interests include Machine Learning focused on autonomous learning and exploration algorithms.
\end{biography}





\end{document}